\let\emptyset\varnothing
\declaretheorem[name=Proposition]{prop}
\declaretheorem[name=Corollary,numbered=no]{corollary*}
\newcommand{\R}{\mathbb{R}}
\newcommand{\scsym}[1]{\textsc{\small#1}}
\newcommand{\bvec}[1]{\bm{#1}}
\newcommand{\E}{\mathbb{E}}
\DeclareMathOperator*{\argmax}{argmax}
\DeclareMathOperator{\softmax}{softmax}
\title{Learning Graph Structure With A Finite-State Automaton Layer}
\let\@fnsymbol\@arabic
\author{%
  Daniel D. Johnson,\, Hugo Larochelle,\, Daniel Tarlow \\
  Google Research\\
  \texttt{\{ddjohnson, hugolarochelle, dtarlow\}@google.com} \\
}
\begin{document}

\maketitle

\begin{abstract}
Graph-based neural network models are producing strong results in a number of domains, in part because graphs provide flexibility to encode domain knowledge in the form of relational structure (edges) between nodes in the graph. In practice, edges are used both to represent intrinsic structure (e.g., abstract syntax trees of programs) and more abstract relations that aid reasoning for a downstream task (e.g., results of relevant program analyses). In this work, we study the problem of learning to derive abstract relations from the intrinsic graph structure. Motivated by their power in program analyses, we consider relations defined by paths on the base graph accepted by a finite-state automaton. We show how to learn these relations end-to-end by relaxing the problem into learning finite-state automata policies on a graph-based POMDP and then training these policies using implicit differentiation. The result is a differentiable Graph Finite-State Automaton (GFSA) layer that adds a new edge type (expressed as a weighted adjacency matrix) to a base graph. We demonstrate that this layer can find shortcuts in grid-world graphs and reproduce simple static analyses on Python programs. Additionally, we combine the GFSA layer with a larger graph-based model trained end-to-end on the variable misuse program understanding task, and find that using the GFSA layer leads to better performance than using hand-engineered semantic edges or other baseline methods for adding learned edge types.
\end{abstract}

\section{Introduction}
Determining exactly which relationships to include when representing an object as a graph is not always straightforward. As a motivating example, consider a dataset of source code samples. One natural way to represent these as graphs is to use the abstract syntax tree (AST), a parsed version of the code where each node represents a logical component.\footnote{For instance, the AST for \texttt{print(x + y)} contains nodes for \texttt{print}, \texttt{x}, \texttt{y}, \texttt{x + y}, and the call as a whole.} But one can also add additional edges to each graph in order to better capture program behaviors. Indeed, adding additional edges to represent control flow or data dependence has been shown to improve performance on code-understanding tasks when compared to a AST-only or token-sequence representation \citep{allamanis2018learning,hellendoorn2020global}.

An interesting observation is that these additional edges are fully determined by the AST, generally by using hand-coded static analysis algorithms. 
This kind of program abstraction is reminiscent of temporal abstraction in reinforcement learning (e.g., action repeats or options \cite{mnih2013playing,sutton1999between}). In both cases, derived higher-level relationships allow reasoning more abstractly and over longer distances (in program locations or time).

In this work, we construct a differentiable neural network layer by combining two ideas: program analyses expressed as reachability problems on graphs \citep{reps1998program}, and mathematical tools for analyzing temporal behaviors of reinforcement learning policies \citep{dayan1993improving}.
This layer, which we call a Graph Finite-State Automaton (GFSA), can be trained end-to-end to add derived relationships (edges) to arbitrary graph-structured data based on performance on a downstream task.\footnote{An implementation is available at \url{https://github.com/google-research/google-research/tree/master/gfsa}.} We show empirically that the GFSA layer has favorable inductive biases relative to baseline methods for learning edge structures in graph-based neural networks.

\section{Background}

\subsection{Neural Networks on Graphs}\label{bkgd-graph-models}
Many neural architectures have been proposed for graph-structured data. We focus on two general families of models: first, message-passing neural networks (MPNNs), which compute sums of messages sent across each edge \citep{gilmer2017neural}, including recurrent models such as Gated Graph Neural Networks (GGNNs) \cite{li2015gated}; second, transformer-like models operating on the nodes of a graph, which include the Relation-Aware Transformer (RAT) \citep{wang2019rat} and Graph Relational Embedding Attention Transformer (GREAT) \citep{hellendoorn2020global} models along with other generalizations of relative attention \citep{shaw2018self}. All of these models assume that each node is associated with a feature vector, and each edge is associated with a feature vector or a discrete type.

\begin{figure}[t]
\centering
\begin{subfigure}[b]{.3\textwidth}
\centering
\includegraphics[height=1.1\textwidth]{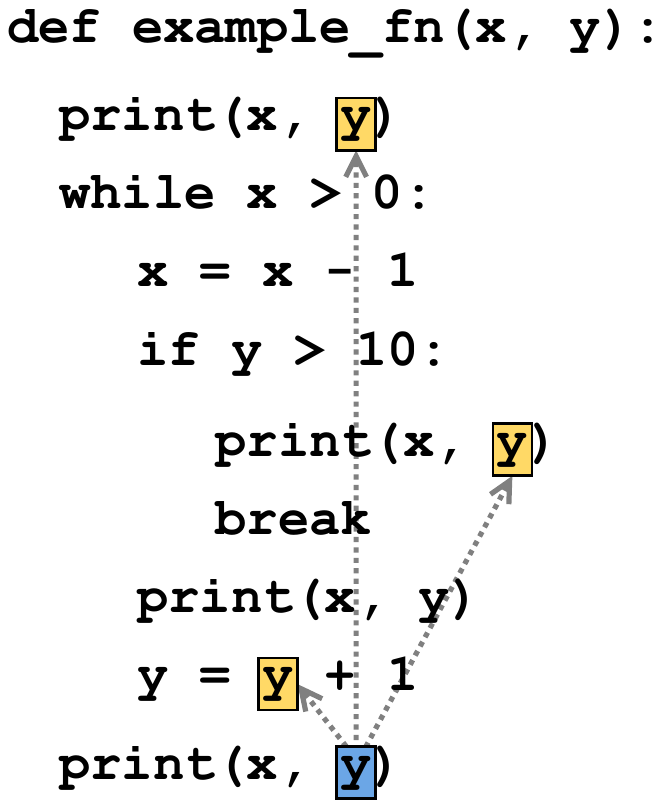}
\caption{\scsym{LastRead} edges for an \\\phantom{(a) }example function.}
\label{fig:lastread-truth}
\end{subfigure}
\begin{subfigure}[b]{.3\textwidth}
\centering
\includegraphics[height=1.1\textwidth]{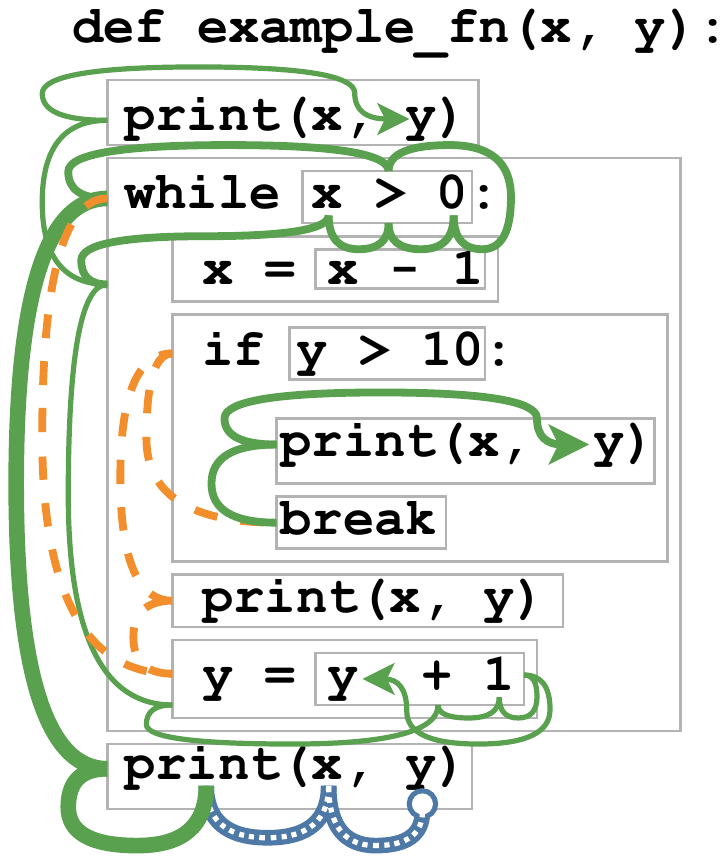}
\caption{Learned behavior of GFSA\\\phantom{(b) }on \scsym{LastRead} task.}
\label{fig:lastread}
\end{subfigure}
\begin{subfigure}[b]{.3\textwidth}
\begin{minipage}[b][1.1\textwidth]{\textwidth}
\vspace*{\fill}
\centering
\definecolor{tab10c0}{rgb}{0.12156862745098039,0.4666666666666667,0.7058823529411765}
\definecolor{tab10c1}{rgb}{1.0,0.4980392156862745,0.054901960784313725}
\definecolor{tab10c2}{rgb}{0.17254901960784313,0.6274509803921569,0.17254901960784313}
\definecolor{tab10c3}{rgb}{0.8392156862745098,0.15294117647058825,0.1568627450980392}
\begin{tikzpicture}[
x=.24cm,
y=.24cm,
pathline/.style={line width=0.45mm,->},
mazecell/.style={fill=black,draw=black!80!white,thick},
pathstart/.style={fill=white},
option0/.style={draw=tab10c0,-{Straight Barb [length=1.7mm, width=1.7mm]}},
option1/.style={draw=tab10c1,-{Triangle [length=1.5mm, width=1.5mm]}},
option2/.style={draw=tab10c2,-{Stealth [length=1.9mm, width=1.9mm]}},
option3/.style={draw=tab10c3,-{Triangle [open, length=1.7mm, width=1.7mm]}}
]
\pgfmathsetmacro{\s}{0.12}
\input{figures/maze_tikz.tex}
\end{tikzpicture}%
\vspace*{\fill}
\end{minipage}
\caption{Learned behavior of GFSA\\\phantom{(c) }on grid-world task.}
\label{fig:gridworld}
\end{subfigure}%
\caption{
(a) Target edges for the \scsym{LastRead} task starting from the final use of \texttt{y} on a handwritten example function.
(b) Learned behavior starting at the final use of \texttt{y} (blue circle). Thickness represents probability mass, color and style represent the finite-state memory, and boxes represent AST nodes in the graph. The automaton changes to a reverse-execution mode (green) and steps backward to the while loop, then nondeterministically either looks at the condition or switches to a break-finding mode (orange) and jumps to the body. In the first case, it checks for uses of \texttt{y} in the condition, then splits again between the previous print and the loop body. In the second, it walks upward until finding a break statement, then transitions back to the reverse-execution mode. For simplicity, we hide backtracking trajectories and combine some intermediate steps. Note that only the start and end locations (colored boxes in (a)) are supervised; all intermediate steps are learned.
(c) Colored arrows denote the path taken by the GFSA policy for each option, shown starting from four arbitrary start positions (white) on a grid-world layout not seen during training. The tabular agent can jump from each start position to the endpoint of any of its arrows in a single step.
}
\label{fig:gridworld-lastread}
\end{figure}

\subsection{Derived Relationships as Constrained Reachability}\label{subsec:derived-relationships-as-constrained-reachability}
Compilers and static analysis tools use a variety of techniques to analyze programs, many of which are based on fixed-point analysis over a problem-specific abstract lattice (see for instance \citet{cousot1977abstract}). However, it is possible to recast many of these analyses within a different framework: graph reachability under formal language constraints \citep{reps1998program}. 

Consider a directed graph $G$ where the nodes and edges are annotated with labels from finite sets $\mathcal{N}$ and $\mathcal{E}$. Let $L$ be a formal language over alphabet $\Sigma = \mathcal{N} \cup \mathcal{E}$, i.e. $L \subseteq \Sigma^*$ is a set of words (finite sequences of labels from $\Sigma$) built using formal rules.
One useful family of languages is the set of \emph{regular languages}: a regular language $L$ consists of the words that match a regular expression, or equivalently the words that a finite-state automaton (FSA) accepts \citep{hopcroft2001introduction}.
Note that each path in $G$ corresponds to a word in $\Sigma^*$, obtained by concatenating the node and edge labels along the path.
We say that a path from node $n_1$ to $n_2$ is an $L$-path if this word is in $L$.

Using a construction similar to \citet{reps1998program}, one can construct a regular language $L$ such that,
if $n_1$ is the location of a variable and $n_2$ is the location of a previous read from that variable, then there is an $L$-path from $n_1$ to $n_2$ (and every $L$-path is of this form)%
; roughly, $L$ contains paths that trace backward through the program's execution. This
corresponds to edge type \scsym{LastRead} as described by \citet{allamanis2018learning}, which is visualized in Figure \ref{fig:lastread-truth}. Similarly, one can define edges and corresponding regular languages that connect each use of a variable to the possible locations of the previous assignment to it (the \scsym{LastWrite} edge type from \citep{allamanis2018learning})
or connect each statement to the statements that could execute directly afterward (which we denote \scsym{NextControlFlow} since it corresponds to the control flow graph); see appendix \ref{appendix:prog-analysis-regular} for details.
More generally, the existence of $L$-paths summarizes the presence of longer chains of relationships with a single pairwise relation. Depending on $L$, this can represent transitive closures, compositions of edges, or other more complex patterns. We may not always know which language $L$ would be useful for a task of interest; our approach makes it possible to jointly learn $L$ and use the $L$-paths as an abstraction for the downstream task.

\section{Approach}
Consider, as a motivating example, a graph representing the abstract syntax tree for a Python program. Each of the nodes of this tree has a type, for instance ``identifier'', ``binary operation'' or ``if statement'', and edges correspond to AST fields such as ``X is the left child of Y'' or ``X is the loop condition of Y''. Section \ref{subsec:derived-relationships-as-constrained-reachability} suggests that $L$-paths on this graph are a useful abstraction for higher-level reasoning, but we do not know what the best choice of $L$ is; we seek a mechanism to learn it end-to-end.

We propose placing an agent on a node of this tree, with actions corresponding to the possible fields of each node (e.g. ``go to parent node'' or ``go to left child''), and observations giving local information about each node (e.g. ``this is an identifier, but not the one we are trying to analyze''). Note that the trajectories of this agent then correspond to paths in the graph.
We allow the agent to terminate the episode and add an edge from its initial to its current location, thus ``accepting" the path it has taken. By averaging over all trajectories, we obtain an expected adjacency matrix for these edges
that summarizes the paths that the agent tends to accept,
which we use as an output edge type.

If the agent's actions were determined by a finite-state automaton for a regular language $L$, the added edges would correspond to $L$-paths. We propose parameterizing the agent with a \emph{learnable} finite-state automaton, so that it can learn to do the kinds of analyses that a regular language can express. As long as the actions and observations are shared across all ASTs, we can then apply this policy to many different ASTs, even ones not seen at training time.

In this section, we formalize and generalize this intuition by describing a transformation from graphs into partially-observable Markov decision processes (POMDPs). We show that, for agents with a finite-state hidden memory, we can efficiently compute and differentiate through the distribution of trajectory endpoints. We propose using this distribution to define a new edge type, and demonstrate that any regular-language-constrained reachability problem (and in particular, basic program analyses) can be expressed as a policy of this form.

\subsection{From Graphs to POMDPs}\label{subsec:from-graphs-to-pomdps}

Suppose we have a family of graphs $\mathcal{G}$ with an associated set of node types $\mathcal{N}$.
Our approach is to transform each graph $G \in \mathcal{G}$ to a 
rewardless POMDP, in which an agent takes a sequence of actions to move between nodes of the graph while observing only local information about its current location.
To ensure that all graphs have compatible action and observation spaces, for each node type $\tau(n) \in \mathcal{N}$ we choose a finite set $\mathcal{M}_{\tau(n)}$ of movement actions associated with that node type (e.g. the set of possible fields that can be followed) and a finite set $\Omega_{\tau(n)}$ of observations (which include the node type as well as other task-specific information). These choices may depend on domain knowledge about the graph family or the task to be solved; see appendix \ref{appendix:encoding-graphs-as-pomdps} for the specific choices we used in our experiments.

At each node $n_t \in N$ of a graph $G \in \mathcal{G}$, the agent selects an action $a_t$ from the set
\[
\mathcal{A}_t = \left\{ (\scsym{Move}, m) ~\middle|~ m \in \mathcal{M}_{\tau(n_t)} \right\} \sqcup \left\{ \scsym{AddEdgeAndStop},\, \scsym{Stop},\, \scsym{Backtrack} \right\}
\]
according to some policy $\pi$. If $a_t = \scsym{AddEdgeAndStop}$, the episode terminates by adding an edge $(n_0, n_t)$ to the output adjacency matrix. If $a_t = \scsym{Stop}$, the episode terminates without adding an edge. If $a_t = (\scsym{Move}, m)$, the agent is either moved to an adjacent node $n_{t+1} \in N$ in the graph by the environment or stays at node $n_t$ (and thus $n_{t+1} = n_t$), and then receives an observation $\omega_{t+1}$ with $\omega_{t+1} \in \Omega_{\tau(n_{t+1})}$.
The MDP is partially observable because the agent does not see node identities or the global structure; instead, $\omega_{t+1}$ encodes only node types and other local information.
Since derived edge types may depend on existing pairwise relationships between nodes (for instance, whether two variables have the same name), we allow the observations to depend on the initial node $n_0$ as well as the current node $n_t$ and most recent transition; in effect, each choice of $n_0 \in N$ specifies a different version of the POMDP for graph $G$. Finally, if $a_t = \scsym{Backtrack}$ is selected, the agent is reset to its initial state. We note that, since the action and observation spaces are shared between all graphs in $\mathcal{G}$, a single policy $\pi$ can be applied to any graph $G \in \mathcal{G}$.

We would like our agent to be powerful enough to extract useful information from this POMDP, but simple enough that we can efficiently compute and differentiate through the learned trajectories. Since our motivating program analyses can be represented as regular languages, which correspond to finite-state automata (see section \ref{subsec:derived-relationships-as-constrained-reachability}), we focus on agents augmented with a finite-state memory.

\subsection{Computing Absorbing Probabilities}\label{subsec:computing-absorbing}
Here we describe an efficient way to compute and differentiate through the distribution over trajectory endpoints for a finite-state memory policy over a graph.
Let $Z$ be a finite set of memory states, and consider a specific policy $\pi_\theta(a_t, z_{t+1} ~|~ \omega_t, z_t)$ parameterized by $\theta$ (see appendix \ref{appendix:subsec-gfsa-impl-parameters}
for details regarding the parameterization we use). Combining the policy $\pi_\theta$ with the environment dynamics for a single graph $G$ yields an absorbing Markov chain over tuples $(n_{t}, \omega_{t}, z_{t})$, with transition distribution
\begin{align*}
p(n_{t+1}, \omega_{t+1}, z_{t+1} | n_{t}, \omega_{t}, z_{t}, n_0) = \sum_{m_t} &\pi_\theta\left(a_t = (\scsym{Move}, m_t), z_{t+1} \middle| \omega_t, z_t\right)
\\[-1em]&\qquad \cdot p(n_{t+1} | n_t, m_t) \cdot p(\omega_{t+1} | n_{t+1}, n_t, m_t, n_0)
\end{align*}
and halting distribution
$
\pi_\theta(a_t \in \{\scsym{AddEdgeAndStop},
\scsym{Stop},\scsym{Backtrack}\} ~|~ n_{t}, \omega_{t}, z_{t}).
$
We can represent this distribution via a transition matrix $Q_{n_0} \in \R^{K \times K}$ where $K$ is the set of possible $(n, \omega, z)$ tuples, along with a halting matrix $H \in \R^{(3 \times |N|) \times K}$ (keeping track of the final node $n_T \in N$ as well as the halting action). We can then compute probabilities for each final action by summing over each possible trajectory length $i$:
\begin{align}
p(a_T, n_T | n_0, \pi_\theta)
= \Big[ \sum_{i \ge 0} H Q_{n_0}^i \bvec\delta_{n_0} \Big]_{(a_T, n_T)}
= H_{(a_T, n_T), :} \left(I - Q_{n_0}\right)^{-1} \bvec\delta_{n_0}, \label{eqn:transition-solve-forward}
\end{align}
where $\bvec\delta_{n_0}$ is a vector with a 1 at the position of the initial state tuple $(n_0, \omega_0, z_0)$. Note that, since the matrix depends on the initial state $n_0$, it would be inefficient to analytically invert this matrix for every $n_0$. We thus use $T_{\max}$ iterations (typically 128) of the the Richardson iterative solver \citep{anderssen1972richardson} to obtain an approximate solution using only efficient matrix-vector products; this is equivalent to truncating the sum to include only paths of length at most $T_{\max}$. 

To compute gradients with respect to $\theta$, we use implicit differentiation to express the gradients as the solution to another (transposed) linear system and use the same iterative solver; this ensures that the memory cost of this procedure is independent of $T_{\max}$ (roughly the cost of a single propagation step for a message-passing model).
We implement the forward and backward passes using the automatic differentiation package JAX \citep{jax2018github}, which makes it straightforward to use implicit differentiation with an efficient matrix-vector product implementation that avoids materializing the full transition matrix $Q_{n_0}$ for each value of $n_0$ (see appendix \ref{appendix:gfsa-impl} for details).

\subsection{Absorbing Probabilities as a Derived Adjacency Matrix}\label{subsec:derived-adjacency-matrix}
Finally, we construct an output weighted adjacency matrix by averaging over trajectories:
\begin{equation}\begin{split}
\widehat{A}_{n, n'} &= p(a_T = \scsym{AddEdgeAndStop}, n_T = n' ~|~ n_0=n, a_T \ne \scsym{Backtrack}, z_0, \pi_\theta),
\\
A_{n, n'} &= \sigma\big(a\,\sigma^{-1}\big(\widehat{A}_{n, n'}\big) + b\big)
\end{split}\end{equation}
where $a, b \in \mathbb{R}$ are optional learned adjustment parameters, $\sigma$ denotes the logistic sigmoid, and $\sigma^{-1}$ denotes its inverse. Note that, since they are derived from a probability distribution, the columns of $\widehat{A}_{n, n'}$ sum to at most 1. The adjustment parameters $a$ and $b$ remove this restriction, allowing the model to express one-to-many relationships.

Given a fixed initial automaton state $z_0$, $A_{n, n'}$ can be viewed as a new weighted edge type.
Since $A_{n, n'}$ is differentiable with respect to the policy parameters $\theta$, this adjacency matrix can either be supervised directly or passed to a downstream graph model and trained end-to-end.

\subsection{Connections to Constrained Reachability Problems}
As described in section \ref{subsec:derived-relationships-as-constrained-reachability}, many interesting derived edge types can be expressed as the solutions to constrained reachability problems. Here, we describe a correspondence between constrained reachability problems on graphs and trajectories within the POMDPs defined in section \ref{subsec:from-graphs-to-pomdps}.

\begin{prop}\label{thm:equiv-mdp-language}
Let $\mathcal{G}$ be a family of graphs annotated with node and edge types.
There exists
an encoding of graphs $G \in \mathcal{G}$ into POMDPs as described in section \ref{subsec:from-graphs-to-pomdps}
and a mapping from regular languages $L$ into finite-state policies $\pi_L$
such that,
for any $G \in \mathcal{G}$, there is an $L$-path from $n_0$ to $n_T$ in $G$ if and only if
$p(a_T = \scsym{AddEdgeAndStop}, n_T | n_0, \pi_L) > 0$.
\end{prop}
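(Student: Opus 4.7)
The plan is to explicitly construct an encoding of each graph $G$ as a POMDP together with a map $L \mapsto \pi_L$ from regular languages to finite-state policies, and then verify both directions of the equivalence. For the encoding, I would take the movement actions $\mathcal{M}_{\tau(n)}$ to be in correspondence with the edge labels $\mathcal{E}$, so that action $(\scsym{Move}, e)$ attempts to traverse an outgoing edge of label $e$ from $n_t$; if no such edge exists, the agent stays at $n_t$. Observations $\omega_t$ would reveal the node label $\lambda(n_t)$ together with a flag indicating whether the previous move was successful. Given a regular language $L$ with DFA $A_L = (Q, \Sigma, \delta, q_0, F)$, I would set $Z = Q \cup \{z_{\mathrm{dead}}\}$ (with $z_{\mathrm{dead}}$ a nonaccepting absorbing state), initialize $z_0 = q_0$, and have $\pi_L$ compute the intermediate DFA state $q_t^\star = \delta(z_t, \lambda(n_t))$ from $\omega_t$ and $z_t$. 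The policy would then (i) assign positive probability to $\scsym{AddEdgeAndStop}$ exactly when $q_t^\star \in F$, and (ii) assign positive probability to each $(\scsym{Move}, e)$ with successor memory $z_{t+1} = \delta(q_t^\star, e)$ if the move succeeds and $z_{t+1} = z_{\mathrm{dead}}$ otherwise; probabilities of $\scsym{Stop}$ and $\scsym{Backtrack}$ would be zero.

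For the forward direction, given an $L$-path $n_0, e_0, n_1, \ldots, n_T$ whose concatenated word $w$ lies in $L$, I would prove by induction on $t$ that the event ``the agent chooses $(\scsym{Move}, \lambda(e_t))$ at step $t$'' has positive probability and maintains the invariant that $z_t$ equals the DFA state reached by reading the prefix $\lambda(n_0)\lambda(e_0)\cdots\lambda(e_{t-1})$ of $w$. At step $T$, the state $q_T^\star$ then equals the full state reached by reading $w$, which lies in $F$ since $w \in L$, so $\scsym{AddEdgeAndStop}$ has positive probability and the entire trajectory does as well.

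For the backward direction, any trajectory of positive probability ending in $(\scsym{AddEdgeAndStop}, n_T)$ must avoid $z_{\mathrm{dead}}$ throughout (otherwise it could never reach an accepting memory state), so every attempted move succeeded and the visited node sequence is a genuine path in $G$. The same invariant then shows that $z_t$ equals the DFA state after reading the prefix of the path word through step $t-1$, and since $\scsym{AddEdgeAndStop}$ is only chosen when $q_T^\star \in F$, the full path word is accepted by $A_L$, exhibiting an $L$-path from $n_0$ to $n_T$.

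The main obstacle is the subtle timing of symbol consumption: the memory update $z_{t+1}$ depends on $\omega_t$, $z_t$, and $a_t$ but \emph{not} on the next observation $\omega_{t+1}$, so one cannot naively advance the DFA on each new node label. My proposal resolves this by folding $\lambda(n_t)$ into the intermediate state $q_t^\star$ \emph{before} choosing an action, and then folding the edge label $e$ into $z_{t+1}$ through the policy's joint output over $(a_t, z_{t+1})$, so that node and edge symbols are consumed in the correct alternating order. A secondary complication is that an agent can attempt a nonexistent move and remain in place; routing such attempts to $z_{\mathrm{dead}}$ ensures they cannot contribute to the acceptance probability, which keeps the path-to-trajectory correspondence clean in both directions.
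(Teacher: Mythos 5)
Your overall route is the same as the paper's: encode movement actions as edge labels, expose the node type plus a success flag in the observation, and simulate a DFA for $L$ in the memory, consuming the node label before the action choice and the edge label through the joint output over $(a_t, z_{t+1})$, with \scsym{AddEdgeAndStop} given positive probability exactly in accepting states. The one step that does not go through as written is your handling of failed moves: you specify that the successor memory is $\delta(q_t^\star, e)$ ``if the move succeeds and $z_{\mathrm{dead}}$ otherwise,'' but the policy class is $\pi(a_t, z_{t+1} \mid \omega_t, z_t)$ --- the new memory state is emitted jointly with the action \emph{before} the environment resolves the move, so $z_{t+1}$ cannot be conditioned on whether the move succeeded. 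This is not cosmetic: without the dead-state routing, a failed move leaves the agent at $n_t$ while the DFA has already consumed the edge symbol $e$, so the memory tracks a word that does not correspond to any path in $G$, and your backward direction (``positive-probability accepting trajectories avoid $z_{\mathrm{dead}}$, hence every attempted move succeeded'') collapses, since spurious acceptances at reachable nodes become possible.

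The repair is immediate from ingredients you already have, and it is exactly what the paper does: the success flag is part of the \emph{next} observation $\omega_{t+1}$, so the policy should react to it one step later --- upon observing the flag \scsym{FALSE}, assign zero probability to \scsym{AddEdgeAndStop} and to further meaningful progress (the paper forces a \scsym{Stop}; equivalently you could transition into $z_{\mathrm{dead}}$ at that step, which requires re-allowing a nonzero \scsym{Stop} probability or accepting that such trajectories never halt and hence contribute nothing to the acceptance probability). With that change, your invariant and both directions of the argument match the paper's proof.
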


In other words, for any ordered pair of nodes $(n_0, n_T)$, determining if there is a path in $G$ that satisfies regular-language reachability constraints is equivalent to determining if a specific policy takes the \scsym{AddEdgeAndStop} action at node $n_T$ with nonzero probability when started at node $n_0$, under a particular POMDP representation. 
See appendix \ref{appendix:constrained-reachability-proof} for a proof. As a specific consequence:

\begin{corollary*}
There exists an encoding of program AST graphs into POMDPs and a specific policy $\pi_{\textsc{\tiny NEXT-CF}}$ with finite-state memory such that $p(a_T = \scsym{AddEdgeAndStop}, n_T ~|~ n_0, \pi) > 0$ if and only if $(n_0, n_T)$ is an edge of type \scsym{NextControlFlow} in the augmented AST graph. Similarly, there are policies $\pi_{\textsc{\tiny LAST-READ}}$ and $\pi_{\textsc{\tiny LAST-WRITE}}$ for edges of type \scsym{LastRead} and \scsym{LastWrite}, respectively.
\end{corollary*}

\subsection{Connections to Reinforcement Learning and the Successor Representation} \label{subsec:successor-conection}
The GFSA layer
deterministically computes continuous edge weights by marginalizing over trajectories. These weights can then be transformed nonlinearly
(e.g. $f(\E[\tau])$ where $f$ is the downstream model and loss and $\tau$ are edge additions from trajectories). In contrast, standard RL approaches produce stochastic discrete samples.
As such, is not possible to ``drop in'' an RL approach instead of GFSA;
one must first reformulate the model and task in terms of an expected reward $\E[f(\tau)]$.

Even so, there are interesting connections between the gradient updates for GFSA and traditional RL. In particular,
the columns of the matrix $\left(I-Q_{n_0}\right)^{-1}$ are known in the RL literature as the \emph{successor representation}.
If immediate rewards are described by $\bvec{r}$, then taking a product $\bvec{r}^T \left(I-Q_{n_0}\right)^{-1}$ corresponds to computing the value function \citep{dayan1993improving}. In our case, instead of specifying a reward, we use the GFSA layer for a downstream task that requires optimizing some loss $\mathcal{L}$. When computing gradients of our parameters with respect to $\mathcal{L}$, backpropagation computes a linear approximation of the downstream network and loss function and then uses it in the intermediate expression
\begin{align*}
\frac{\partial{\mathcal{L}}}{\partial p( \cdot | n_0, \pi_\theta)}^T H \left(I-Q_{n_0}\right)^{-1}.
\end{align*}
This is analogous to a non-stationary ``reward function'' for the GFSA policy, which assigns reward to the absorbing states that produce useful edges for the rest of the model. Unlike in standard RL, however, this quantity depends on the full marginal distribution over behaviors. As such, the ``reward'' assigned to a given trajectory may depend on the probability of other, mutually exclusive trajectories.

\section{Related Work}

Some prior work has explored learning edges in graphs. \citet{kipf2018neural} propose a neural relational inference model, which infers pairwise relationships from observed particle trajectories but does not add them to a base graph. \citet{franceschi2019learning} infer missing edges in a single fixed graph by jointly optimizing the edge structure and a classification model; this method only infers edges of a predefined type, and does not generalize to new graphs at test time. \citet{yun2019graph} propose adding new edge types to a graph family by learning to compose a fixed number of existing edge types, which can be seen as a special case of GFSA where each state is visited once. The MINERVA model, described in \citet{das2017go}, uses an RL agent trained with REINFORCE to add edges to a knowledge base, but requires direct supervision of edges. \citet{wang2019learning} use a RL policy to \emph{remove} existing edges from a noisy graph, with reward coming from a downstream classification task.

\citet{bielik2017learning} apply decision trees to program traces with a counterexample-guided program generator in order to learn static analyses of programs. Their method is provably correct, but cannot be used as a component of an end-to-end differentiable model or applied to general graph structures.

Our work shares many commonalities with reinforcement learning techniques. 
Section \ref{subsec:successor-conection} describes a connection between the GFSA computation and the successor representation \citep{dayan1993improving}.
Our work is also conceptually similar to methods for learning options. For instance, \citet{bacon2017option} describe an end-to-end architecture for learning options by differentiating through a primary policy's reward. Their option policies and primary policy are analogous to our GFSA edge types and downstream model; on the other hand, they apply policy gradient methods to trajectory samples instead of optimizing over full marginal distributions, and their full architecture is still a policy, not a general model on graphs.

Existing graph embedding methods have used stochastic walks on graphs \citep{ying2018graph,ivanov2018anonymous,zhang2018retgk,jiang2019data, huang2019graph,busch2020pushnet}, but generally assume uniform random walks. \citet{alon2018code2seq} propose representing ASTs by sampling random paths and concatenating their node labels, then attending over the resulting  sequences. \citet{dai2019learning} describe a framework of MDPs over graphs, but focus on a ``learning to explore'' task, where the goal is to visit many nodes and the agent can see the entire subgraph it has already visited. \citet{hudson2019learning} propose treating the nodes of an inferred scene graph as states of a learned state machine, and learning to update the current active node based on natural-language inputs.

Self-attention can be viewed as constructing a weighted adjacency matrix similar to GFSA, but only considers pairwise relationships and not longer paths. Existing approaches to learning multi-step path-based relationships include
iterating a graph neural network until convergence \citep{scarselli2008graph} and using a learned stopping criterion as in Universal Transformers \citep{dehghani2018universal}.
The algorithm in section \ref{subsec:computing-absorbing} in particular resembles
running a separate graph neural network model to convergence for each start node and training with recurrent backpropagation \citep{scarselli2008graph, liao2018reviving}, and is also similar to other uses of implicit differentiation \citep{wilder2019end,bai2019deep,rajeswaran2019meta}. The GFSA layer enables multi-step relationships to be efficiently computed for every start node in parallel and provides good expressivity and inductive biases for learning edges, in contrast to previous techniques that focus on learning node representations and must learn from scratch to propagate multi-step information without letting distinct paths interfere with each other.

\citet{weiss2018extracting} describe a method for extracting a discrete finite-state automaton from a RNN; this assumes access to an existing trained RNN for the task, and is intended for recognizing sequences, not adding edges to graphs.
See also
\citet{mohri2009weighted} for a framework of weighted automata on sequences.

\section{Experiments}

\subsection{Grid-World Options}
As an illustrative example, we consider the task of discovering useful navigation strategies in grid-world environments. We generate grid-world layouts using the LabMaze generator \citep{beattie2016deepmind},\footnote{\url{https://github.com/deepmind/labmaze}} and interpret each cell as a node in a graph $G$, where edges represent cardinal directions. We augment this graph with additional edges from a GFSA layer, using four independent GFSA policies to add four additional edge types;
let $G'_\theta(G)$ denote the augmented graph using GFSA parameters $\theta$.
Next, we construct a pathfinding task on the augmented graph $G'_\theta(G)$, in which a graph-specific agent finds the shortest path to some goal node $g$. We assign an equal cost to all edges (including those that the GFSA layer adds); when the agent follows a GFSA edge, it ends up at a destination cell with probability proportional to the edge weights from the GFSA layer.

Inspired by existing work on meta-learning options \citep{frans2017meta}, we interpret the GFSA-derived edges as a kind of option for this agent: given a random graph, the edges added by the GFSA layer should make it possible to quickly reach any goal node $g$ from any start location $n_0$. More specifically, we train the graph-independent GFSA layer (in an outer loop) to minimize the number of steps that a graph-specific policy (trained in an inner loop) takes to reach the goal $g$, i.e. we minimize
\begin{align*}
\mathcal{L} = \mathbb{E}_{G,n_0,g}\left[\mathbb{E}_{n_t \sim \pi^*(\cdot | n_{t-1}, g, G'_\theta(G))}\left[T ~\middle|~ n_T = g\right]\right]
\end{align*}
where $\pi^*(\cdot | g, G'_\theta(G))$ is an optimal tabular policy for graph $G'_\theta(G)$ and goal $g$. In order to differentiate this with respect to the GFSA parameters $\theta$, we use entropy regularization to ensure $\pi^*(\cdot | g, G'_\theta(G))$ is smooth, and solve for it by iterating the soft Bellman equation until convergence \citep{haarnoja2017reinforcement}, again using implicit differentiation to backpropagate through that solution (see appendix \ref{appendix:gridworld}).

Figure \ref{fig:gridworld} shows the derived edges learned by the GFSA layer
on a graph not seen during training; we find that the edges learned by the GFSA layer are discrete and roughly correspond to diagonal motions in the grid. Over the course of training the GFSA layer, the average number of steps taken by the (optimal) primary policy (on a validation set of unseen layouts) decreases from 40.1 steps to 11.5 steps, a substantial improvement in the end-to-end performance. This example illustrates the kind of relationships the GFSA layer can learn from end-to-end supervision; note that we do not claim these options are optimal for this task or would be practical in a more traditional RL context. 

\subsection{Learning Static Analyses of Python Code}\label{subsec:static-analyses}
Proposition \ref{thm:equiv-mdp-language} ensures that a GFSA is theoretically capable of performing simple static analyses of code. We demonstrate that the GFSA can practically learn to do these analyses by casting them as pairwise binary classification problems. We first generate a synthetic dataset of Python programs by sampling from a probabilistic context-free grammar over a subset of Python.
We then transform the corresponding ASTs into graphs, and compute the three edge types \scsym{NextControlFlow}, \scsym{LastRead}, and \scsym{LastWrite}, which are commonly used for program understanding tasks \citep{allamanis2018learning,hellendoorn2020global} and which we describe in section \ref{subsec:derived-relationships-as-constrained-reachability}.
Note that there may be multiple edges from the same statement or variable, since there are often multiple possible execution paths through the program.

For each of these edge types, we train a GFSA layer to classify whether each ordered pair of nodes is connected with an edge of that type. We use the focal-loss objective \citep{lin2017focal}, a more stable variant of the cross-entropy loss for highly unbalanced classification problems, minimizing
\[
\mathcal{L} = \mathbb{E}_{(N,E) \sim \mathcal{D}}\left[\,\sum_{n_1, n_2 \in N}
\begin{cases}
-(1-A_{n_1,n_2})^\gamma \log(A_{n_1,n_2}) &\text{if }(n_1 \to n_2) \in E,\\
-(A_{n_1,n_2})^\gamma \log(1 - A_{n_1,n_2}) &\text{otherwise}
\end{cases}
\right]
\]
where the expectation is taken over graphs in the training dataset $\mathcal{D}$.

We compare against four graph model baselines: a GGNN \citep{li2015gated}, a GREAT model over AST graphs \citep{hellendoorn2020global}, a RAT model \citep{wang2019rat}, and an NRI-style encoder  \citep{kipf2018neural}. For the GGNN, GREAT, and RAT models, we present results for two methods of computing output adjacency matrices: the first computes a learned key-value dot product (similar to dot-product attention) and interprets it as an adjacency matrix, and the second runs the model separately for each possible source node, tagging that source with an extra node feature, and computing an output for each possible destination (denoted ``nodewise''). For the NRI encoder model, the output head is an MLP over node feature pairs as described by \citet{kipf2018neural}; we extend the NRI model with residual connections and layer normalization to improve stability, similar to a transformer model \citep{vaswani2017attention}. All baselines use a logistic sigmoid as a final activation, and are trained with the focal-loss objective. See appendix \ref{appendix:edge-classification} for more details.

As an ablation, we also train a standard RL agent with the same parameterization as GFSA, inspired by MINERVA \citep{das2017go}. We replace the cross-entropy loss with a reward of +1 for adding a correct edge (or correctly not adding any) and 0 otherwise, and train using REINFORCE with 20 rollouts per start node and a leave-one-out control variate \citep{williams1992simple,kool2019buy}. Since edges are added by single trajectories rather than marginals over trajectories, this RL agent can add at most one edge from each start node.

\begin{table}
\centering
\caption{Results on the program analysis edge-classification tasks. Values are F1 scores (in percent), with bold indicating overlapping 95\% confidence intervals with the best model; see appendix \ref{appendix:edge-classification-detailed-results} for full-precision results. ``nw'' denotes nodewise output, and ``dp'' denotes dot-product output.}
\begin{tabular}{rrrrrrrrrr}
\toprule
\multicolumn{10}{c}{\textbf{100,000 training examples}}\\
\midrule
\textbf{Task} & \multicolumn{3}{c}{Next Control Flow} &\multicolumn{3}{c}{Last Read} & \multicolumn{3}{c}{Last Write} \\
\cmidrule(r{4pt}){2-4} \cmidrule(l{4pt} r{4pt}){5-7} \cmidrule(l{4pt}){8-10}
\textbf{Example size} &
\multicolumn{1}{c}{1x}&
\multicolumn{1}{c}{2x}&
\multicolumn{1}{c}{0.5x}&
\multicolumn{1}{c}{1x}&
\multicolumn{1}{c}{2x}&
\multicolumn{1}{c}{0.5x}&
\multicolumn{1}{c}{1x}&
\multicolumn{1}{c}{2x}&
\multicolumn{1}{c}{0.5x}\\
\midrule
\emph{RAT nw} &
99.98 & 99.94 & 99.99 & 99.86 & 96.29 & \textbf{99.98} & 99.83 & 94.87 & \textbf{99.97} \\
\emph{GREAT nw} &
99.98 & 99.87 & 99.98 & 99.91 & 95.12 & \textbf{99.98} & 99.75 & 93.22 & 99.93 \\
\emph{GGNN nw} &
99.98 & 93.90 & 97.77 & 95.52 & 9.22 & 86.24 & 98.82 & 40.69 & 88.28 \\
\emph{RAT dp} &
99.99 & 92.53 & 96.59 & 99.96 & 42.58 & 91.96 & 99.98 & 68.96 & 99.76 \\
\emph{GREAT dp} &
99.99 & 96.32 & 98.36 & \textbf{99.99} & 47.07 & 99.78 & \textbf{99.99} & 68.46 & 99.88 \\
\emph{GGNN dp} &
99.94 & 62.75 & 98.51 & 98.44 & 0.99 & 63.77 & 99.35 & 38.40 & 94.52 \\
\emph{NRI encoder} &
99.98 & 85.91 & 99.92 & 99.83 & 43.44 & 99.39 & 99.87 & 52.73 & 99.84 \\
\emph{RL ablation} &
94.24 & 93.56 & 94.83 & 96.69 & 94.85 & 97.85 & 98.08 & 96.64 & 98.93 \\
\emph{GFSA (ours)} &
\textbf{100.00} & \textbf{99.99} & \textbf{100.00} & 99.66 & \textbf{98.94} & 99.90 & 99.47 & \textbf{98.73} & 99.78 \\
\midrule
\multicolumn{10}{c}{\textbf{100 training examples}}\\
\midrule
\emph{RAT nw} &
98.63 & 95.93 & 96.32 & 80.28 & 1.12 & 83.49 & 79.27 & 8.91 & 83.79 \\
\emph{GREAT nw} &
98.23 & 97.98 & 98.52 & 78.88 & 6.96 & 60.90 & 80.19 & 40.22 & 84.54 \\
\emph{GGNN nw} &
99.37 & 98.36 & 98.60 & 79.36 & 28.28 & 5.66 & 91.13 & 71.62 & 91.79 \\
\emph{RAT dp} &
81.81 & 68.46 & 87.05 & 59.53 & 28.91 & 62.27 & 75.99 & 48.10 & 81.63 \\
\emph{GREAT dp} &
86.60 & 62.98 & 80.58 & 57.02 & 27.13 & 64.48 & 73.69 & 46.27 & 80.03 \\
\emph{GGNN dp} &
76.85 & 22.99 & 28.91 & 44.37 & 9.64 & 38.34 & 53.82 & 17.84 & 55.08 \\
\emph{NRI encoder} &
81.74 & 69.08 & 88.87 & 68.69 & 26.64 & 73.52 & 65.38 & 36.43 & 73.86 \\
\emph{RL ablation} &
91.70 & 91.14 & 92.29 & 98.48 & 97.03 & 99.17 & 98.32 & \textbf{96.96} & 99.07 \\
\emph{GFSA (ours)} &
\textbf{99.99} & \textbf{99.99} & \textbf{100.00} & \textbf{98.81} & \textbf{97.82} & \textbf{99.22} & \textbf{98.71} & \textbf{96.98} & \textbf{99.55} \\
\bottomrule
\end{tabular}
    \label{tab:edge_supervision}
\end{table}

Table \ref{tab:edge_supervision} shows results of each of these models on the three edge classification tasks. We present results after training on a dataset of 100,000 examples as well as on a smaller dataset of only 100 examples, and report F1 scores at the best classification threshold; we choose the model with the best validation performance from a 32-job random hyperparameter search. To assess generalization, we also show results on two modified data distributions: programs of half the size of those in the training set (0.5x), and programs twice the size (2x).
When trained on 100,000 examples, all models achieve high accuracy on examples of the training size, but some fail to generalize, especially to larger programs. When trained on 100 examples, only the GFSA
layer and RL ablation consistently achieve high accuracy, highlighting the strong inductive bias for constrained-reachability-based reasoning tasks.
The GFSA layer trained with exact marginals and cross-entropy loss obtains higher accuracy than the RL ablation, and also converges more reliably:
82\% of GFSA layer training jobs achieve at least 90\% accuracy on the validation set, compared to only 11\% of RL ablation jobs.

Figure \ref{fig:lastread} shows an example of the behavior that the GFSA layer learns for the \scsym{LastRead} task based on only input-output supervision. We note that the GFSA layer discovers separate modes for break statements and regular control flow, and also learns to split probability mass across multiple trajectories in order to account for multiple paths through the program, closely following the program semantics. The paths learned by this policy are also quite long; the policy shown takes an average of 35 actions before accepting (on the 1x test set). More generally, this shows that the GFSA layer is able to learn many-hop reasoning that covers large distances in the graph by breaking down the reasoning into subcomponents defined by the learned automaton states.

\subsection{Variable Misuse}\label{subsec:var_misuse}

\begin{table}[t]
\centering
\caption{
Accuracy on the variable misuse task, in percent. ``Start'' indicates that edges are added to the base graph before running the graph model, and ``middle'' indicates they are added halfway through, conditioned on the output of the first half. Bold indicates overlapping 95\% confidence intervals with the best model for each metric. See appendix \ref{appendix:var-misuse-detailed-results} for  standard error estimates and additional details.
}
\label{tab:var_misuse}

\setlength{\tabcolsep}{3pt}
\begin{tabular}{l cc c cc c cc c cc}
\toprule
Example type:
& \multicolumn{2}{c}{All}
&& \multicolumn{2}{c}{No bug}
&& \multicolumn{5}{c}{With bug}\\
\cmidrule{2-3}\cmidrule{5-6}\cmidrule{8-12}
Metric: & \multicolumn{2}{c}{Full accuracy}
&& \multicolumn{2}{c}{Classification}
&& \multicolumn{2}{c}{Classification}
&& \multicolumn{2}{c}{Loc \& Repair} \\
\cmidrule{2-3}\cmidrule{5-6}\cmidrule{8-9}\cmidrule{11-12}
Graph model family: & \emph{RAT} & \emph{GGNN} && \emph{RAT} & \emph{GGNN} && \emph{RAT} & \emph{GGNN} && \emph{RAT} & \emph{GGNN} \\
\midrule

\emph{Base AST graph only}
 & {88.22} & {83.52} &  & {92.05} & {91.26} &  & {93.03} & {88.15} &  & {88.30} & {81.63} \\
\emph{Base AST graph, +2 layers}
 & {87.85} & {84.38} &  & {92.45} & {88.80} &  & {92.03} & {91.92} &  & {87.76} & {83.97} \\
\emph{Hand-engineered edges}
 & {88.50} & {84.78} &  & {92.93} & {90.19} &  & {92.48} & {91.56} &  & {88.39} & {83.52} \\
\emph{NRI head @ start}
 & {88.71} & {84.47} &  & {92.55} & {91.49} &  & {93.21} & {89.38} &  & {88.73} & {82.73} \\
\emph{NRI head @ middle}
 & {88.42} & {84.41} &  & {92.83} & {88.29} &  & {92.31} & {92.20} &  & {88.62} & {84.44} \\
\emph{Random walk @ start}
 & {88.91} & {84.52} &  & \textbf{93.22} & {91.35} &  & {92.77} & {89.28} &  & {88.73} & {82.96} \\
\emph{RL ablation @ middle}
 & {87.28} & {84.96} &  & {90.36} & {90.44} &  & {93.71} & {90.64} &  & {87.73} & {84.30} \\
\emph{GFSA layer (ours) @ start}
 & \textbf{89.47} & {85.01} &  & \textbf{93.10} & {90.08} &  & {93.56} & {91.80} &  & {89.58} & {83.91} \\
\emph{GFSA layer (ours) @ middle}
 & \textbf{89.63} & {84.72} &  & {92.66} & {90.98} &  & \textbf{94.25} & {89.81} &  & \textbf{89.93} & {83.63} \\

\bottomrule
\end{tabular}

\end{table}

Finally, we investigate performance on the variable misuse task \citep{allamanis2018learning,vasic2019neural}. 
Following \citet{hellendoorn2020global}, we 
use a dataset of small code samples from a permissively-licenced subset of the ETH 150k Python dataset \citep{raychev2016probabilistic}, where synthetic variable misuse bugs have been introduced in half of the examples by randomly replacing one of the identifiers with a different identifier in that program.%
\footnote{\url{https://github.com/google-research-datasets/great}}
We train a model to predict the location of the incorrect identifier, as well as another location in the program containing the correct replacement that would restore the original program; we use a special ``no-bug'' location for the unmodified examples, similar to \citet{vasic2019neural} and \citet{hellendoorn2020global}.

We consider two graph neural network architectures: either an eight-layer RAT model \citep{wang2019rat} or eight GGNN blocks \citep{li2015gated} with two message passing iterations per block (similar to \citet{hellendoorn2020global}).
For each, we investigate adding different types of edges to the base AST graph:
no extra edges, hand-engineered edges used by \citet{allamanis2018learning} and \citet{hellendoorn2020global}, weighted edges learned by a GFSA layer, weighted edges output by an NRI-like pairwise MLP, weighted edges produced by an ablation of GFSA consisting of a uniform random walk with a learned halting probability, and a single edge per start state sampled by a GFSA-based RL agent. For the NRI and GFSA layers, we investigate adding the edges either 
before the graph neural network model (building from the base graph), or halfway through the model (conditioned on the node embeddings from the first half).
For the RL agent, we train with REINFORCE and a learned scalar reward baseline, and use the downstream cross-entropy loss as the reward.
To show the effect of just increasing model capacity, we also present results for ten-layer models on the base graph.
In all models, we initialize node embeddings based on a subword tokenization of the program (using the \texttt{Tensor2Tensor} library by \citet{vaswani2018tensor2tensor}), and predict a joint distribution over the bug and repair locations, with softmax normalization and the standard cross entropy objective. See appendix \ref{appendix:var-misuse} for additional details on each of the above models, as well as results using an eight-layer GREAT model \citep{hellendoorn2020global}.

The results are shown in Table \ref{tab:var_misuse}.
We report overall accuracy, along with a breakdown by example type: for non-buggy examples, we report the fraction of examples the model predicts as non-buggy, and for buggy examples, we report both accuracy of the classification and accuracy of the predicted error and replacement identifier locations conditioned on the classification.
Consistent with prior work, adding the hand-engineered features from \citet{allamanis2018learning} improves performance over only using the base graph.
Interestingly, adding weighted edges using a random walk on the base graph yields similar performance to adding hand-engineered edges, suggesting that, for this task, improving connectivity may be more important than the specific program analyses used.
We find that the GFSA layer combined with the RAT graph model obtains the best performance, outperforming the hand-engineered edges.
Interestingly, we observe that the GFSA layer does not seem to converge to a discrete adjacency matrix, but instead assigns continuous weights. We conjecture that the output edge weights may provide additional representative power to the base model.

\section{Conclusion}
Inspired by ideas from programming languages and reinforcement learning, we propose the differentiable GFSA layer, which learns to add new edges to a base graph.
We show that the GFSA layer can learn sophisticated behaviors for navigating grid-world environments and analyzing program behavior, and demonstrate that it can act as a viable replacement for hand-engineered edges in the variable misuse task.
In the future, we plan to apply the GFSA layer to other domains and tasks, such as molecular structures or larger code repositories. We also hope to investigate the interpretability of the edges learned by the GFSA layer to determine whether they correspond to useful general concepts, which might allow the GFSA edges to be shared between multiple tasks.

\section*{Broader Impact}
We consider this work to be a general technical and theoretical contribution, without well-defined specific impacts.
If applied to real-world program understanding tasks, extensions of this work might lead to reduced bug frequency or improved developer productivity. On the other hand, those benefits might accrue mostly to groups with sufficient resources to incorporate machine learning into their development practices. Additionally, if users put too much trust in the output of the model, they could inadvertently introduce bugs in their code because of incorrect model predictions. If applied to other tasks involving structured data, the impact would depend on the specific application; we leave the exploration of these other applications and their potential impacts to future work.

\section*{Acknowledgments}
We would like to thank Aditya Kanade and Charles Sutton for pointing out the connection to \citet{reps1998program}, and Petros Maniatis for help with the variable misuse dataset. We would also like to thank Dibya Ghosh and Yujia Li for their helpful comments and suggestions during the writing process, and the Brain Program Learning, Understanding, and Synthesis team at Google for useful feedback throughout the course of the project.
Finally, we thank the reviewers for their feedback and for pointing out relevant related work.

\small
\bibliographystyle{plainnat}
\bibliography{references}

\begin{thebibliography}{49}
\providecommand{\natexlab}[1]{#1}
\providecommand{\url}[1]{\texttt{#1}}
\expandafter\ifx\csname urlstyle\endcsname\relax
  \providecommand{\doi}[1]{doi: #1}\else
  \providecommand{\doi}{doi: \begingroup \urlstyle{rm}\Url}\fi

\bibitem[Allamanis et~al.(2018)Allamanis, Brockschmidt, and
  Khademi]{allamanis2018learning}
Miltiadis Allamanis, Marc Brockschmidt, and Mahmoud Khademi.
\newblock Learning to represent programs with graphs.
\newblock In \emph{International Conference on Learning Representations}, 2018.

\bibitem[Alon et~al.(2019)Alon, Brody, Levy, and Yahav]{alon2018code2seq}
Uri Alon, Shaked Brody, Omer Levy, and Eran Yahav.
\newblock code2seq: Generating sequences from structured representations of
  code.
\newblock In \emph{International Conference on Learning Representations}, 2019.

\bibitem[Anderssen and Golub(1972)]{anderssen1972richardson}
RS~Anderssen and GH~Golub.
\newblock Richardson's non-stationary matrix iterative procedure. rep.
\newblock Technical report, STAN-CS-72-304, Computer Science Dept., Stanford
  University Report, 1972.

\bibitem[Bacon et~al.(2017)Bacon, Harb, and Precup]{bacon2017option}
Pierre-Luc Bacon, Jean Harb, and Doina Precup.
\newblock The option-critic architecture.
\newblock In \emph{Thirty-First AAAI Conference on Artificial Intelligence},
  2017.

\bibitem[Bai et~al.(2019)Bai, Kolter, and Koltun]{bai2019deep}
Shaojie Bai, J~Zico Kolter, and Vladlen Koltun.
\newblock Deep equilibrium models.
\newblock In \emph{Advances in Neural Information Processing Systems}, pages
  688--699, 2019.

\bibitem[Beattie et~al.(2016)Beattie, Leibo, Teplyashin, Ward, Wainwright,
  K{\"u}ttler, Lefrancq, Green, Vald{\'e}s, Sadik, et~al.]{beattie2016deepmind}
Charles Beattie, Joel~Z Leibo, Denis Teplyashin, Tom Ward, Marcus Wainwright,
  Heinrich K{\"u}ttler, Andrew Lefrancq, Simon Green, V{\'\i}ctor Vald{\'e}s,
  Amir Sadik, et~al.
\newblock Deepmind lab.
\newblock \emph{arXiv preprint arXiv:1612.03801}, 2016.

\bibitem[Bielik et~al.(2017)Bielik, Raychev, and Vechev]{bielik2017learning}
Pavol Bielik, Veselin Raychev, and Martin Vechev.
\newblock Learning a static analyzer from data.
\newblock In \emph{International Conference on Computer Aided Verification},
  pages 233--253. Springer, 2017.

\bibitem[Bradbury et~al.(2018)Bradbury, Frostig, Hawkins, Johnson, Leary,
  Maclaurin, and Wanderman-Milne]{jax2018github}
James Bradbury, Roy Frostig, Peter Hawkins, Matthew~James Johnson, Chris Leary,
  Dougal Maclaurin, and Skye Wanderman-Milne.
\newblock {JAX}: composable transformations of {P}ython+{N}um{P}y programs.
\newblock \url{http://github.com/google/jax}, 2018.

\bibitem[Busch et~al.(2020)Busch, Pi, and Seidl]{busch2020pushnet}
Julian Busch, Jiaxing Pi, and Thomas Seidl.
\newblock Pushnet: Efficient and adaptive neural message passing.
\newblock \emph{arXiv preprint arXiv:2003.02228}, 2020.

\bibitem[Cousot and Cousot(1977)]{cousot1977abstract}
Patrick Cousot and Radhia Cousot.
\newblock Abstract interpretation: a unified lattice model for static analysis
  of programs by construction or approximation of fixpoints.
\newblock In \emph{Proceedings of the 4th ACM SIGACT-SIGPLAN symposium on
  Principles of programming languages}, pages 238--252, 1977.

\bibitem[Dai et~al.(2019)Dai, Li, Wang, Singh, Huang, and
  Kohli]{dai2019learning}
Hanjun Dai, Yujia Li, Chenglong Wang, Rishabh Singh, Po-Sen Huang, and Pushmeet
  Kohli.
\newblock Learning transferable graph exploration.
\newblock In \emph{Advances in Neural Information Processing Systems}, pages
  2514--2525, 2019.

\bibitem[Das et~al.(2018)Das, Dhuliawala, Zaheer, Vilnis, Durugkar,
  Krishnamurthy, Smola, and McCallum]{das2017go}
Rajarshi Das, Shehzaad Dhuliawala, Manzil Zaheer, Luke Vilnis, Ishan Durugkar,
  Akshay Krishnamurthy, Alex Smola, and Andrew McCallum.
\newblock Go for a walk and arrive at the answer: Reasoning over paths in
  knowledge bases using reinforcement learning.
\newblock In \emph{International Conference on Learning Representations}, 2018.

\bibitem[Dayan(1993)]{dayan1993improving}
Peter Dayan.
\newblock Improving generalization for temporal difference learning: The
  successor representation.
\newblock \emph{Neural Computation}, 5\penalty0 (4):\penalty0 613--624, 1993.

\bibitem[Dehghani et~al.(2018)Dehghani, Gouws, Vinyals, Uszkoreit, and
  Kaiser]{dehghani2018universal}
Mostafa Dehghani, Stephan Gouws, Oriol Vinyals, Jakob Uszkoreit, and {\L}ukasz
  Kaiser.
\newblock Universal transformers.
\newblock \emph{arXiv preprint arXiv:1807.03819}, 2018.

\bibitem[Franceschi et~al.(2019)Franceschi, Niepert, Pontil, and
  He]{franceschi2019learning}
Luca Franceschi, Mathias Niepert, Massimiliano Pontil, and Xiao He.
\newblock Learning discrete structures for graph neural networks.
\newblock \emph{arXiv preprint arXiv:1903.11960}, 2019.

\bibitem[Frans et~al.(2018)Frans, Ho, Chen, Abbeel, and
  Schulman]{frans2017meta}
Kevin Frans, Jonathan Ho, Xi~Chen, Pieter Abbeel, and John Schulman.
\newblock Meta learning shared hierarchies.
\newblock In \emph{International Conference on Learning Representations}, 2018.

\bibitem[Gilmer et~al.(2017)Gilmer, Schoenholz, Riley, Vinyals, and
  Dahl]{gilmer2017neural}
Justin Gilmer, Samuel~S Schoenholz, Patrick~F Riley, Oriol Vinyals, and
  George~E Dahl.
\newblock Neural message passing for quantum chemistry.
\newblock In \emph{Proceedings of the 34th International Conference on Machine
  Learning-Volume 70}, pages 1263--1272. JMLR. org, 2017.

\bibitem[Haarnoja et~al.(2017)Haarnoja, Tang, Abbeel, and
  Levine]{haarnoja2017reinforcement}
Tuomas Haarnoja, Haoran Tang, Pieter Abbeel, and Sergey Levine.
\newblock Reinforcement learning with deep energy-based policies.
\newblock In \emph{Proceedings of the 34th International Conference on Machine
  Learning-Volume 70}, pages 1352--1361. JMLR. org, 2017.

\bibitem[Hellendoorn et~al.(2020)Hellendoorn, Sutton, Singh, and
  Maniatis]{hellendoorn2020global}
Vincent~J Hellendoorn, Charles Sutton, Rishabh Singh, and Petros Maniatis.
\newblock Global relational models of source code.
\newblock In \emph{International Conference on Learning Representations}, 2020.

\bibitem[Hopcroft et~al.(2001)Hopcroft, Motwani, and
  Ullman]{hopcroft2001introduction}
John~E Hopcroft, Rajeev Motwani, and Jeffrey~D Ullman.
\newblock Introduction to automata theory, languages, and computation.
\newblock \emph{Acm Sigact News}, 32\penalty0 (1):\penalty0 60--65, 2001.

\bibitem[Huang et~al.(2019)Huang, Song, Li, and Hu]{huang2019graph}
Xiao Huang, Qingquan Song, Yuening Li, and Xia Hu.
\newblock Graph recurrent networks with attributed random walks.
\newblock In \emph{Proceedings of the 25th ACM SIGKDD International Conference
  on Knowledge Discovery \& Data Mining}, pages 732--740, 2019.

\bibitem[Hudson and Manning(2019)]{hudson2019learning}
Drew Hudson and Christopher~D Manning.
\newblock Learning by abstraction: The neural state machine.
\newblock In \emph{Advances in Neural Information Processing Systems}, pages
  5903--5916, 2019.

\bibitem[Ivanov and Burnaev(2018)]{ivanov2018anonymous}
Sergey Ivanov and Evgeny Burnaev.
\newblock Anonymous walk embeddings.
\newblock In \emph{International Conference on Machine Learning}, pages
  2186--2195, 2018.

\bibitem[Jiang et~al.(2019)Jiang, Lin, Tang, and Luo]{jiang2019data}
Bo~Jiang, Doudou Lin, Jin Tang, and Bin Luo.
\newblock Data representation and learning with graph diffusion-embedding
  networks.
\newblock In \emph{Proceedings of the IEEE Conference on Computer Vision and
  Pattern Recognition}, pages 10414--10423, 2019.

\bibitem[Kipf et~al.(2018)Kipf, Fetaya, Wang, Welling, Zemel,
  et~al.]{kipf2018neural}
T~Kipf, E~Fetaya, K-C Wang, M~Welling, R~Zemel, et~al.
\newblock Neural relational inference for interacting systems.
\newblock \emph{Proceedings of Machine Learning Research}, 80, 2018.

\bibitem[Kool et~al.(2019)Kool, van Hoof, and Welling]{kool2019buy}
Wouter Kool, Herke van Hoof, and Max Welling.
\newblock Buy 4 {REINFORCE} samples, get a baseline for free!
\newblock \emph{{ICLR} workshop: Deep {RL} Meets Structured Prediction}, 2019.

\bibitem[Li et~al.(2015)Li, Tarlow, Brockschmidt, and Zemel]{li2015gated}
Yujia Li, Daniel Tarlow, Marc Brockschmidt, and Richard Zemel.
\newblock Gated graph sequence neural networks.
\newblock \emph{arXiv preprint arXiv:1511.05493}, 2015.

\bibitem[Liao et~al.(2018)Liao, Xiong, Fetaya, Zhang, Yoon, Pitkow, Urtasun,
  and Zemel]{liao2018reviving}
Renjie Liao, Yuwen Xiong, Ethan Fetaya, Lisa Zhang, KiJung Yoon, Xaq Pitkow,
  Raquel Urtasun, and Richard Zemel.
\newblock Reviving and improving recurrent back-propagation.
\newblock In \emph{International Conference on Machine Learning}, pages
  3082--3091, 2018.

\bibitem[Lin et~al.(2017)Lin, Goyal, Girshick, He, and
  Doll{\'a}r]{lin2017focal}
Tsung-Yi Lin, Priya Goyal, Ross Girshick, Kaiming He, and Piotr Doll{\'a}r.
\newblock Focal loss for dense object detection.
\newblock In \emph{Proceedings of the IEEE international conference on computer
  vision}, pages 2980--2988, 2017.

\bibitem[Mnih et~al.(2013)Mnih, Kavukcuoglu, Silver, Graves, Antonoglou,
  Wierstra, and Riedmiller]{mnih2013playing}
Volodymyr Mnih, Koray Kavukcuoglu, David Silver, Alex Graves, Ioannis
  Antonoglou, Daan Wierstra, and Martin Riedmiller.
\newblock Playing atari with deep reinforcement learning.
\newblock \emph{arXiv preprint arXiv:1312.5602}, 2013.

\bibitem[Mohri(2009)]{mohri2009weighted}
Mehryar Mohri.
\newblock Weighted automata algorithms.
\newblock In \emph{Handbook of weighted automata}, pages 213--254. Springer,
  2009.

\bibitem[Rajeswaran et~al.(2019)Rajeswaran, Finn, Kakade, and
  Levine]{rajeswaran2019meta}
Aravind Rajeswaran, Chelsea Finn, Sham~M Kakade, and Sergey Levine.
\newblock Meta-learning with implicit gradients.
\newblock In \emph{Advances in Neural Information Processing Systems}, pages
  113--124, 2019.

\bibitem[Raychev et~al.(2016)Raychev, Bielik, and
  Vechev]{raychev2016probabilistic}
Veselin Raychev, Pavol Bielik, and Martin Vechev.
\newblock Probabilistic model for code with decision trees.
\newblock \emph{ACM SIGPLAN Notices}, 51\penalty0 (10):\penalty0 731--747,
  2016.

\bibitem[Reps(1998)]{reps1998program}
Thomas Reps.
\newblock Program analysis via graph reachability.
\newblock \emph{Information and software technology}, 40\penalty0
  (11-12):\penalty0 701--726, 1998.

\bibitem[Saad(2003)]{saad2003iterative}
Yousef Saad.
\newblock \emph{Iterative methods for sparse linear systems}, volume~82.
\newblock SIAM, 2003.

\bibitem[Scarselli et~al.(2008)Scarselli, Gori, Tsoi, Hagenbuchner, and
  Monfardini]{scarselli2008graph}
Franco Scarselli, Marco Gori, Ah~Chung Tsoi, Markus Hagenbuchner, and Gabriele
  Monfardini.
\newblock The graph neural network model.
\newblock \emph{IEEE Transactions on Neural Networks}, 20\penalty0
  (1):\penalty0 61--80, 2008.

\bibitem[Shaw et~al.(2018)Shaw, Uszkoreit, and Vaswani]{shaw2018self}
Peter Shaw, Jakob Uszkoreit, and Ashish Vaswani.
\newblock Self-attention with relative position representations.
\newblock In \emph{Proceedings of the 2018 Conference of the North American
  Chapter of the Association for Computational Linguistics: Human Language
  Technologies, Volume 2 (Short Papers)}, pages 464--468, 2018.

\bibitem[Sutton et~al.(1999)Sutton, Precup, and Singh]{sutton1999between}
Richard~S Sutton, Doina Precup, and Satinder Singh.
\newblock Between mdps and semi-mdps: A framework for temporal abstraction in
  reinforcement learning.
\newblock \emph{Artificial intelligence}, 112\penalty0 (1-2):\penalty0
  181--211, 1999.

\bibitem[Vasic et~al.(2019)Vasic, Kanade, Maniatis, Bieber, and
  Singh]{vasic2019neural}
Marko Vasic, Aditya Kanade, Petros Maniatis, David Bieber, and Rishabh Singh.
\newblock Neural program repair by jointly learning to localize and repair.
\newblock In \emph{International Conference on Learning Representations}, 2019.

\bibitem[Vaswani et~al.(2017)Vaswani, Shazeer, Parmar, Uszkoreit, Jones, Gomez,
  Kaiser, and Polosukhin]{vaswani2017attention}
Ashish Vaswani, Noam Shazeer, Niki Parmar, Jakob Uszkoreit, Llion Jones,
  Aidan~N Gomez, {\L}ukasz Kaiser, and Illia Polosukhin.
\newblock Attention is all you need.
\newblock In \emph{Advances in Neural Information Processing Systems}, pages
  5998--6008, 2017.

\bibitem[Vaswani et~al.(2018)Vaswani, Bengio, Brevdo, Chollet, Gomez, Gouws,
  Jones, Kaiser, Kalchbrenner, Parmar, et~al.]{vaswani2018tensor2tensor}
Ashish Vaswani, Samy Bengio, Eugene Brevdo, Francois Chollet, Aidan Gomez,
  Stephan Gouws, Llion Jones, {\L}ukasz Kaiser, Nal Kalchbrenner, Niki Parmar,
  et~al.
\newblock Tensor2tensor for neural machine translation.
\newblock In \emph{Proceedings of the 13th Conference of the Association for
  Machine Translation in the Americas (Volume 1: Research Papers)}, pages
  193--199, 2018.

\bibitem[Wang et~al.(2019{\natexlab{a}})Wang, Shin, Liu, Polozov, and
  Richardson]{wang2019rat}
Bailin Wang, Richard Shin, Xiaodong Liu, Oleksandr Polozov, and Matthew
  Richardson.
\newblock {RAT-SQL}: Relation-aware schema encoding and linking for
  text-to-{SQL} parsers.
\newblock \emph{arXiv preprint arXiv:1911.04942}, 2019{\natexlab{a}}.

\bibitem[Wang et~al.(2019{\natexlab{b}})Wang, Yu, Wang, Cheng, Zhang, Zha, He,
  and Chen]{wang2019learning}
Lu~Wang, Wenchao Yu, Wei Wang, Wei Cheng, Wei Zhang, Hongyuan Zha, Xiaofeng He,
  and Haifeng Chen.
\newblock Learning robust representations with graph denoising policy network.
\newblock In \emph{2019 IEEE International Conference on Data Mining (ICDM)},
  pages 1378--1383. IEEE, 2019{\natexlab{b}}.

\bibitem[Weiss et~al.(2018)Weiss, Goldberg, and Yahav]{weiss2018extracting}
Gail Weiss, Yoav Goldberg, and Eran Yahav.
\newblock Extracting automata from recurrent neural networks using queries and
  counterexamples.
\newblock In \emph{International Conference on Machine Learning}, pages
  5247--5256. PMLR, 2018.

\bibitem[Wilder et~al.(2019)Wilder, Ewing, Dilkina, and Tambe]{wilder2019end}
Bryan Wilder, Eric Ewing, Bistra Dilkina, and Milind Tambe.
\newblock End to end learning and optimization on graphs.
\newblock In \emph{Advances in Neural Information Processing Systems}, pages
  4674--4685, 2019.

\bibitem[Williams(1992)]{williams1992simple}
Ronald~J Williams.
\newblock Simple statistical gradient-following algorithms for connectionist
  reinforcement learning.
\newblock \emph{Machine learning}, 8\penalty0 (3-4):\penalty0 229--256, 1992.

\bibitem[Ying et~al.(2018)Ying, He, Chen, Eksombatchai, Hamilton, and
  Leskovec]{ying2018graph}
Rex Ying, Ruining He, Kaifeng Chen, Pong Eksombatchai, William~L Hamilton, and
  Jure Leskovec.
\newblock Graph convolutional neural networks for web-scale recommender
  systems.
\newblock In \emph{Proceedings of the 24th ACM SIGKDD International Conference
  on Knowledge Discovery \& Data Mining}, pages 974--983, 2018.

\bibitem[Yun et~al.(2019)Yun, Jeong, Kim, Kang, and Kim]{yun2019graph}
Seongjun Yun, Minbyul Jeong, Raehyun Kim, Jaewoo Kang, and Hyunwoo~J Kim.
\newblock Graph transformer networks.
\newblock In \emph{Advances in Neural Information Processing Systems}, pages
  11960--11970, 2019.

\bibitem[Zhang et~al.(2018)Zhang, Wang, Xiang, Huang, and
  Nehorai]{zhang2018retgk}
Zhen Zhang, Mianzhi Wang, Yijian Xiang, Yan Huang, and Arye Nehorai.
\newblock Retgk: Graph kernels based on return probabilities of random walks.
\newblock In \emph{Advances in Neural Information Processing Systems}, pages
  3964--3974, 2018.

\end{thebibliography}

\normalsize
\setcounter{prop}{0}

\appendix
\counterwithin{figure}{section}
\counterwithin{table}{section}
\section{Details on Constrained Reachability}
In this section we describe how program analyses can be converted to regular languages, and provide the proofs for the statements in section \ref{subsec:derived-relationships-as-constrained-reachability}.

\subsection{Example: Regular Languages For Program Analyses}\label{appendix:prog-analysis-regular}

The following grammar defines a regular language for the \scsym{LastWrite} edge type as described by \citet{allamanis2018learning}, where $n_2$ is the last write to variable $n_1$ if there is a path from $n_1$ to $n_2$ whose label sequence matches the nonterminal \textbf{Last-Write}. We denote node type labels with capitals, edge type labels in lowercase, and nonterminal symbols in boldface. For simplicity we assume a single target variable name with its own node type TargetVariable, and only consider a subset of possible AST nodes.

\newcommand{\grammarcomment}[1]{\emph{\color{black!60!white}{#1}}}
\begin{center}
\small
\begin{tabular}{rcl}
    &&\grammarcomment{All \scsym{LastWrite} edges start from a use of the target variable.}\\
    \textbf{Last-Write} &=& TargetVariable to-parent \textbf{Find-Current-Statement} \\
    \\
    &&\grammarcomment{Once we find a statement, go backward.}\\
    \textbf{Find-Current-Statement} &=& ExprStmt \textbf{Step-Backward} \\
        &\vline& Assign \textbf{Step-Backward} \\
    &&\grammarcomment{While in an expression, step out.}\\
        &\vline& BinOp to-parent \textbf{Find-Current-Statement} \\
        &\vline& Call to-parent \textbf{Find-Current-Statement} \\
    \\
    &&\grammarcomment{Stop if we find an assignment to the target variable.}\\
    \textbf{Check-Stmt} &=& Assign to-target TargetVariable\\
        &&\grammarcomment{Skip other statements.}\\
          &\vline& Assign to-target NonTargetVariable to-parent \textbf{Step-Backward} \\
          &\vline& ExprStmt \textbf{Step-Backward} \\
        &&\grammarcomment{Either enter If blocks or skip them.}\\
          &\vline& If to-last-child \textbf{Check-Stmt} \\
          &\vline& If \textbf{Step-Backward} \\
        &&\grammarcomment{Either enter While blocks or skip them, possibly jumping back to a break.}\\
          &\vline& While \textbf{Step-Backward} \\
          &\vline& While to-last-child \textbf{Check-Stmt} \\
          &\vline& While to-last-child \textbf{Find-Break-A} \\
          \\
        &&\grammarcomment{If we have a previous statement, check it.}\\
    \textbf{Step-Backward} &=& prev-stmt \textbf{Check-Stmt}\\
        &&\grammarcomment{If this is the first statement of an If block, exit.}\\
                  &\vline& from-first-child If \textbf{Step-Backward}\\
        &&\grammarcomment{If this is the first statement of a While block, either exit or go back}\\
        &&\grammarcomment{to the end of the loop body.}\\
                  &\vline& from-first-child While \textbf{Step-Backward}\\
                  &\vline& from-first-child While to-last-child \textbf{Check-Stmt}\\
          \\
          &&\grammarcomment{If we find a Break, this is a possible previous loop exit point.}\\
    \textbf{Find-Break-A} &=& Break \textbf{Step-Backward}\\
          &&\grammarcomment{Either way, keep looking for other break statements.}\\
            &\vline& Break \textbf{Find-Break-B}\\
                  &\vline& ExprStmt \textbf{Find-Break-B}\\
                  &\vline& If to-last-child \textbf{Find-Break-A} \\
          &&\grammarcomment{Don't enter while loops, since break statements only affect one loop.}\\
                  &\vline& While \textbf{Find-Break-B} \\
          \\
    \textbf{Find-Break-B} &=& prev-stmt \textbf{Find-Break-A}\\
                  &\vline& from-first-child If \textbf{Find-Break-B}\\
\end{tabular}
\end{center}

The constructions for \scsym{NextControlFlow} and \scsym{LastRead} are similar. Note that for \scsym{LastRead}, instead of skipping entire statements until finding an assignment, the path must iterate over all expressions used within each statement and check for uses of the variable. For \scsym{NextControlFlow}, instead of stepping backward, the path steps forward, and instead of searching for break statements after entering a loop, it searches for the containing loop when reaching a break statement. (This is because \scsym{NextControlFlow} simulates program execution forward instead of in reverse. Note that regular languages are closed under reversal \citep{hopcroft2001introduction}, so such a transformation between forward and reverse paths is possible in general; we could similarly construct a language for \scsym{LastControlFlow} if desired.)

\subsection{Proof of Proposition \ref{thm:equiv-mdp-language-APPENDIX}} \label{appendix:constrained-reachability-proof}

We recall proposition \ref{thm:equiv-mdp-language-APPENDIX}:

\begin{prop}\label{thm:equiv-mdp-language-APPENDIX}
Let $\mathcal{G}$ be a family of graphs annotated with node and edge types.
There exists
an encoding of graphs $G \in \mathcal{G}$ into POMDPs as described in section \ref{subsec:from-graphs-to-pomdps}
and a mapping from regular languages $L$ into finite-state policies $\pi_L$
such that,
for any $G \in \mathcal{G}$, there is an $L$-path from $n_0$ to $n_T$ in $G$ if and only if
$p(a_T = \scsym{AddEdgeAndStop}, n_T | n_0, \pi_L) > 0$.
\end{prop}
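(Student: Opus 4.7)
The plan is to explicitly construct, for any regular language $L$, a POMDP encoding and a finite-memory policy $\pi_L$ that simulates a deterministic finite automaton recognizing $L$. Fix a DFA $A = (Q, q_0, \delta, F)$ accepting $L$ over $\Sigma = \mathcal{N} \cup \mathcal{E}$. For each node type $\tau$, set $\mathcal{M}_\tau = \mathcal{E}$ (one movement action per edge label), and let the environment transition $p(n_{t+1} \mid n_t, m_t)$ place positive mass on each neighbor $n'$ reached by an outgoing edge of label $m_t$ from $n_t$, holding $n_{t+1} = n_t$ when no such edge exists. Let $\omega_t$ encode the label $\ell_{n_t}$, the label $\ell_{e_t}$ of the transition just taken (for $t \ge 1$), and a flag indicating whether that transition was valid.

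For the policy, take $Z = Q$ with $z_0 = q_0$, and maintain the invariant that $z_t$ is the DFA state reached by running $A$ on the prefix of the path-word through $\ell_{n_{t-1}}$ (with $z_0 = q_0$ the empty-prefix state). At each step $t$, using $\omega_t$, the policy computes $q'_t = \delta(\delta(z_t, \ell_{e_t}), \ell_{n_t})$ for $t \ge 1$, and $q'_0 = \delta(q_0, \ell_{n_0})$ at $t=0$; this is the DFA state after reading through $\ell_{n_t}$. Then $\pi_L$ assigns strictly positive probability to \scsym{AddEdgeAndStop} whenever $q'_t \in F$ and the latest transition was valid, strictly positive probability to each $(\scsym{Move}, m)$ with $m \in \mathcal{E}$, and zero probability to \scsym{Stop} and \scsym{Backtrack}; when a movement action is chosen, it sets $z_{t+1} = q'_t$.

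The equivalence then follows by a straightforward induction on trajectory length. In one direction, any trajectory of positive probability that terminates with \scsym{AddEdgeAndStop} at node $n_T$ traces a genuine path $n_0, n_1, \dots, n_T$ in $G$ (since acceptance is forbidden after an invalid move); the invariant gives $q'_T = \delta^*(q_0, \ell_{n_0}\ell_{e_1}\cdots\ell_{n_T})$, and acceptance requires $q'_T \in F$, so the path-word lies in $L$ and the path is an $L$-path. Conversely, given any $L$-path $n_0, \dots, n_T$, choosing moves $m_t = \ell_{e_{t+1}}$ followed by \scsym{AddEdgeAndStop} at step $T$ yields a trajectory in which every policy decision and every environment transition has positive probability, so its joint probability is strictly positive.

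The main obstacle is the phase offset between the policy's memory update and the incoming observation: the policy must output $z_{t+1}$ conditioned on $(\omega_t, z_t)$, yet a faithful DFA run must consume two symbols per graph step (an edge label followed by a node label). Allowing $\omega_t$ to carry both $\ell_{n_t}$ and $\ell_{e_t}$ resolves this by letting the policy advance the DFA by two symbols at each decision point, keeping the $z_t$-invariant intact. A secondary concern is movement actions for which no outgoing edge of the chosen label exists: the validity flag in $\omega_t$ prevents the policy from accepting after such a ``stuck'' self-loop, ruling out spurious word sequences that do not correspond to any actual path in $G$.
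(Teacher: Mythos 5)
Your overall approach is essentially the paper's: simulate a DFA for $L$ in the policy's finite memory over the alternating node-label/edge-label word, with a validity flag in the observation to handle moves for which no matching edge exists. The phase bookkeeping (carrying $\ell_{e_t}$ in $\omega_t$ and advancing the DFA two symbols per decision, versus the paper's advancing on $\tau(n_t)$ and then on the chosen move) is an unimportant difference. However, there is a genuine gap in how you treat invalid moves. You forbid \scsym{AddEdgeAndStop} only at the decision point \emph{immediately} following a stuck self-loop, while still giving positive probability to every movement action and still advancing the DFA state on the phantom edge label. A trajectory can therefore take an invalid move, continue with valid moves, and accept later; the word that drove the DFA then contains an edge label corresponding to no edge of $G$, so the ``only if'' direction fails. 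Concretely, let $L = \{\tau_1\, a\, \tau_1\, b\, \tau_2\}$, and let $G$ have a node $u$ of type $\tau_1$, a node $v$ of type $\tau_2$, a single edge $u \to v$ with label $b$, and no $a$-edges. There is no $L$-path from $u$ to $v$, but under your policy the branch ``attempt $a$ (stuck at $u$, flag invalid), then move $b$ to $v$'' drives the DFA through $\tau_1 a \tau_1 b \tau_2 \in L$ with the most recent transition valid, so $p(a_T = \scsym{AddEdgeAndStop}, v \mid u, \pi_L) > 0$. Your induction claim that every accepting trajectory traces a genuine path in $G$ ``since acceptance is forbidden after an invalid move'' is thus false as stated: the prohibition lasts one step, not for the rest of the branch.

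The repair is what the paper's construction does: upon observing the \scsym{FALSE} flag, the policy must kill the branch entirely --- take \scsym{Stop} with probability one (equivalently, enter an absorbing non-accepting memory state) --- rather than merely suppressing acceptance for a single step. With that change, any trajectory that ever takes \scsym{AddEdgeAndStop} consists solely of valid transitions, its node sequence is a path in $G$, the word it feeds the DFA is exactly that path's word, and both directions of your argument go through.
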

\begin{proof}
We start by defining a generic choice of POMDP conversion that depends only on the node and edge types.
Let $G \in \mathcal{G}$ be a directed graph with node types $\mathcal{N}$, edge types $\mathcal{E}$, nodes $N$, and edges $E \subseteq N \times N \times \mathcal{E}$. 
We convert it to a POMDP by choosing $\Omega_{\tau(n)} = \{(\tau(n), \scsym{TRUE}), (\tau(n), \scsym{FALSE})\}$, $\mathcal{M}_{\tau(n)} = \mathcal{E}$,
\[
p(n_{t+1} | n_t, a_t = (\scsym{Move}, m_t)) = \begin{cases}
1/|A^{m_t}_{n_t}| &\text{if }n_{t+1} \in A^{m_t}_{n_t},\\
1 &\text{if }n_{t+1} = n_t\text{ and }A^{m_t}_{n_t} = \emptyset,\\
0 &\text{otherwise},
\end{cases}
\]
$\omega_0 = (\tau(n_0), \scsym{TRUE})$, and $\omega_{t+1} = (\tau(n_{t+1}),n_{t+1} \in A^{m_t}_{n_t})$, where we let $A^{m_t}_{n_t} = \left\{ n_{t+1} ~\middle|~ (n_t, n_{t+1}, m_t) \in E \right\}$ be the set of neighbors adjacent to $n_t$ via an edge of type $m_t$.

Now suppose $L$ is a regular language over sequences of node and edge types. Construct a deterministic finite automaton $M$ that accepts exactly the words in $L$ (for instance, using the subset construction) \citep{hopcroft2001introduction}. Let $Q$ denote its state space, $q_0$ denote its initial state, $\delta : Q \times \Sigma \to Q$ be its transition function, and $F \subseteq Q$ be its set of accepting states.
We choose $Q$ as the finite state memory of our policy $\pi_L$, i.e. at each step $t$ we assume our agent is associated with a memory state $z_t \in Q$. We let $z_0 = q_0$ be the initial memory state of $\pi_L$.

Consider an arbitrary memory state $z_t \in Q$ and observation $\omega_t = (\tau(n_t), e_t)$. We now construct a set of possible next actions and memories $N_t \subseteq \mathcal{A}_t \times Q$. If $e_t = \scsym{FALSE}$, let $N_t = \varnothing$. Otherwise, let $z_{t+1/2} = \delta(z_t, \tau(n_t))$. If $z_{t+1/2} \in F$, add $(\scsym{AddEdgeAndStop}, z_{t+1/2})$ to $N_t$. Next, for each $m \in \mathcal{E}$, add $((\scsym{Move}, m), \delta(z_{t+1/2}, m))$ to $N_t$. Finally, let
\[
\pi_L(a_t, z_{t+1} | z_t, \omega_t) = \begin{cases}
1 / |N_t| &\text{if } (a_t, z_{t+1}) \in N_t,\\
1 &\text{if } N_t = \varnothing, a_t = \scsym{Stop}, z_{t+1} = z_t\\
0 &\text{otherwise.}
\end{cases}
\]

The $e_t = \scsym{FALSE} \implies N_t = \varnothing$ constraint ensures that the partial sequence of labels along any accepting trajectory matches the sequence of node type observations and movement actions produced by $\pi_L$. Since $\pi_L$ starts in the same state as $M$, and assigns nonzero probability to exactly the state transitions determined by $\delta$, it follows that the memory state of the agent along any partial trajectory $[n_0, m_0, n_1, m_1, \ldots, n_t]$ corresponds to the state of $M$ after processing the label sequence $[\tau(n_0), m_0, \tau(n_1), m_1, \ldots, \tau(n_t)]$.

Since $\pi_L$ assigns nonzero probability to the \scsym{AddEdgeAndStop} action exactly when memory state is an accepting state from $F$, and $M$ is in an accepting state from $F$ exactly when the label sequence is in $L$, we conclude that desired property holds.
\end{proof}

\begin{corollary*}
There exists an encoding of program AST graphs into POMDPs and a specific policy $\pi_{\textsc{\tiny NEXT-CF}}$ with finite-state memory such that $p(a_T = \scsym{AddEdgeAndStop}, n_T ~|~ n_0, \pi) > 0$ if and only if $(n_0, n_T)$ is an edge of type \scsym{NextControlFlow} in the augmented AST graph. Similarly, there are policies $\pi_{\textsc{\tiny LAST-READ}}$ and $\pi_{\textsc{\tiny LAST-WRITE}}$ for edges of type \scsym{LastRead} and \scsym{LastWrite}, respectively.
\end{corollary*}
\begin{proof}
This corollary follows directly from Proposition \ref{thm:equiv-mdp-language} and the existence of regular languages for these edge types (see appendix \ref{appendix:prog-analysis-regular}).
\end{proof}

Note that equivalent policies also exist for POMDPs encoded differently than the proof of proposition \ref{thm:equiv-mdp-language} describes. For instance, instead of having ``TargetVariable'' as a node type and constructing edges for each target variable name separately, we can extend the observation $\omega_t$ to contain information on whether the current variable name matches the initial variable name and then find all edges at once, which we do for our experiments. Additionally, if an action would cause the policy to transition into an absorbing but non-accepting state (i.e. a failure state) in the discrete finite automaton for $L$, the policy can immediately take a \scsym{Backtrack} or \scsym{Stop} action instead, or reallocate probability to other states, instead of just cycling forever in that state. This allows the policy $\pi$ to more evenly allocate probability across possible answers, and we observe that the GFSA policies learn to do this in our experiments.

\section{Graph to POMDP Dataset Encodings} \label{appendix:encoding-graphs-as-pomdps}
Here we describe the encodings of graphs as POMDPs that we use for our experiments.

\subsection{Python Abstract Syntax Trees}\label{appendix:encoding-graphs-ast}
We convert all of our code samples into the unified format defined by the \texttt{gast} library,\footnote{\url{https://github.com/serge-sans-paille/gast/releases/tag/0.3.3}} which is a slightly-modified version of the abstract syntax tree provided with Python 3.8 that is backward-compatible with older Python versions. We then use a generic mechanism to convert each AST node into one or more graph nodes and corresponding POMDP states.

Each AST node type $\tau$ (such as \texttt{FunctionDef}, \texttt{If}, \texttt{While}, or \texttt{Call}) has a fixed set $F$ of possible field names. We categorize these fields into four categories: optional fields $F_\text{opt}$, exactly-one-child fields $F_\text{one}$, nonempty sequence fields $F_\text{nseq}$, and possibly empty sequence fields $F_\text{eseq}$. We define the observation space at nodes of type $\tau$ as
\[
\Omega_\tau = \{\tau\} \times \Gamma \times \Psi_\tau
\]
where $\Gamma$ is a task-specific extra observation space, and $\Psi_\tau$ indicates the result of the previous action:
\begin{align*}
\Psi_\tau &= \{ (\scsym{FROM}, f) ~|~ f \in F \cup \{\scsym{PARENT}\} \} \cup \{ (\scsym{MISSING}, f) ~|~ f \in F_\text{opt} \cup F_\text{eseq}  \}.
\end{align*}
The $(\tau, \gamma, (\scsym{FROM}, f))$ observations are used when the agent moves to an edge of type $\tau$ from a child from field $f$ (or from the parent node), and the $(\tau, \gamma, (\scsym{MISSING}, f))$ observations are used when the agent attempts to move to a child for field $f$ but no such child exists. We define the movement space as
\begin{align*}
\mathcal{M}_\tau &= \{\scsym{GO-PARENT}\} \cup \{ (\scsym{GO}, f) ~|~ f \in F_\text{one} \cup F_\text{opt} \} \}
\\&\qquad\cup \{ (x, f) ~|~ x \in \{\scsym{GO-FIRST}, \scsym{GO-LAST}, \scsym{GO-ALL}\},\, f \in F_\text{nseq} \cup F_\text{eseq}  \}.
\end{align*}
$\scsym{GO}$ moves the agent to the single child for that field, $\scsym{GO-FIRST}$ moves it to the first child, $\scsym{GO-LAST}$ moves it to the last child, and $\scsym{GO-ALL}$ distributes probability evenly among all children. $\scsym{GO-PARENT}$ moves the agent to the parent node; we omit this movement action for the root node ($\tau = \text{Module}$).

For each sequence field $f \in F_\text{nseq} \cup F_\text{eseq}$, we also define a helper node type $\tau_f$, which is used to construct a linked list of children. This helper node has the fixed observation space
\begin{align*}
\Psi_{\tau_f} = \{&
\scsym{FROM-PARENT},
\scsym{FROM-ITEM},
\scsym{FROM-NEXT},
\scsym{FROM-PREV},
\\&\scsym{MISSING-NEXT},
\scsym{MISSING-PREV}
\}
\end{align*}
and action space
\begin{align*}
\mathcal{M}_{\tau_f} = \{&
\scsym{GO-PARENT},
\scsym{GO-ITEM},
\scsym{GO-NEXT},
\scsym{GO-PREV}
\}.
\end{align*}
When encoding the AST as a graph, helper nodes of this type are inserted between the AST node of type $\tau$ and the children for field $f$: the ``parent'' of a helper node is the original AST node, and the ``item'' of the $n$th helper node is the $n$th child of the original AST node for field $f$.

We note that this construction differs from the construction in the proof of proposition \ref{thm:equiv-mdp-language}, in that movement actions are specific to the node type of the current node. When the agent takes the \textsc{GO-PARENT} action, the observation for the next step informs it what field type it came from. This helps keep the state space of the GFSA policy small, since it does not have to guess what its parent node is and then remember the results; it can instead simply walk to the parent node and then condition its next action on the observed field. The construction described here still allows encoding the edges from \scsym{NextControlFlow}, \scsym{LastRead}, and \scsym{LastWrite} as policies, as we empirically demonstrate by training the GFSA layer to replicate those edges.

\subsection{Grid-world Environments}
For the grid-world environments, we represent each traversable grid cell as a node, and classify the cells into eleven node types corresponding to which movement directions (left, up, right, and down) are possible:
\[
\mathcal{N} = \{\text{LU}, \text{LR}, \text{LD}, \text{UR}, \text{UD}, \text{RD}, \text{LUR}, \text{LUD}, \text{LRD}, \text{URD}, \text{LURD}\}
\]
Note that in our dataset, no cell has fewer than two neighbors.

For each node type $\tau \in \mathcal{N}$ the movement actions $\mathcal{M}_\tau$ correspond exactly to the possible directions of movement; for instance, cells of type LD have $\mathcal{M}_\text{LD} = \{\text{L}, \text{D}\}$. We use a trivial observation space $\Omega_\tau = \{\tau\}$, i.e. the GFSA automaton sees the type of the current node but no other information.

When converting grid-world environments into POMDPs, we remove the \scsym{Backtrack} action to encourage the GFSA edges to match more traditional RL option sub-policies. 

\section{GFSA Layer Implementation}\label{appendix:gfsa-impl}

Here we describe additional details about the implementation of the GFSA layer.

\subsection{Parameters}\label{appendix:subsec-gfsa-impl-parameters}

We represent the parameters $\theta$ of the GFSA layer as a table indexed by feasible observation and action pairs $\Phi$ as well as state transitions:
\begin{align*}
\Phi &= \left(\bigcup_{\tau \in \mathcal{N}} \Omega_\tau \times \mathcal{A}_\tau\right),
&
\theta &: Z \times \Phi \times Z \to \mathbb{R},
\end{align*}
where $Z = \{0, 1, \ldots, |Z|-1\}$ is the set of memory states. We treat the elements of $\theta$ as unnormalized log-probabilities and then set $\pi = \softmax(\theta)$, normalizing separately across actions and new memory states for each possible current memory state and observation.

To initialize $\theta$, we start by defining a ``base distribution'' $p$, which chooses a movement action at random with probability 0.95 and a special action (\scsym{AddEdgeAndStop}, \scsym{Stop}, \scsym{Backtrack}) otherwise, and which stays in the same state with probability 0.8 and changes states randomly otherwise. Next, we sample our initial probabilities $q$ from a Dirichlet distribution centered on $p$ (with concentration parameters $\alpha_i = p_i / \beta$ where $\beta$ is a temperature parameter), and then take a (stabilized) logarithm $\theta_i = \log(q_i + 0.001)$. This ensures that the initial policy has some initial variation, while still biasing it toward staying in the same state and taking movement actions most of the time.

\subsection{Algorithmic Details}

As a preprocessing step, for each graph in the dataset, we compute the set $X$ of all $(n, \omega)$ node-observation pairs for the corresponding MDP. We then compute ``template transition matrices'', which specify how to convert the probability table $\theta$ into a transition matrix by associating transitions $X \times X$ and halting actions $X \times \left\{ \scsym{AddEdgeAndStop}, \scsym{Stop}, \scsym{Backtrack} \right\}$ with their appropriate indicies into $\Phi$. Then, when running the model, we retrieve blocks of $\theta$ according to those indices to construct the transition matrix for that graph (implemented with ``gather'' and ``scatter'' operations).

Conceptually, each possible starting node $n_0$ could produce a separate transition matrix $Q_{n_0} : X \times Z \times X \times Z \to \mathbb{R}$ because part of the observation in each state (which we denote $\gamma \in \Gamma$ and leave out of $X$) may depend on the starting node or other learned parameters. We address this by instead computing an ``observation-conditioned'' transition tensor
\[
Q : \Gamma \times X \times Z \times X \times Z \to \mathbb{R}
\]
that specifies transition probabilities for each observation $\gamma$,
along with a start-node-conditioned observation tensor
\[
C : N \times X \times \Gamma \to \mathbb{R}
\]
that specifies the probability of observing $\gamma$ for a given start node $n_0$ and current $(n, \omega)$ tuple. In order to compute a matrix-vector product $Q_{n_0} \bvec{v}$ we can then use the tensor product
\[
\sum_{i, z, \gamma}\, C_{n_0, i, \gamma}\, Q_{\gamma, i, z, i', z'}\, v_{i,z}
\]
which can be computed efficiently without having to materialize a separate transition matrix for every start node $n_0 \in N$.

During the forward pass through the GFSA layer, to solve for the absorbing probabilities in equation \ref{eqn:transition-solve-forward}, we iterate
\begin{align*}
\bvec{x}_0 &= \bvec{\delta}_{n_0},
&
\bvec{x}_{k+1} = \bvec{\delta}_{n_0} + Q_{n_0} \bvec{x}_k
\end{align*}
until a fixed number of steps $K$, then approximate
\[
p(a_T, n_T | n_0, \pi)
= H_{(a_T, n_T), :} \left(I - Q_{n_0}\right)^{-1} \bvec\delta_{n_0}
\approx H_{(a_T, n_T), :} \,\bvec{x}_{K}.
\]
To efficiently compute the backwards pass without saving all of the values of $\bvec{x}_k$, we use the \texttt{jax.lax.custom\_linear\_solve} function from JAX \citep{jax2018github}, which converts the gradient equations into a transposed matrix system
\[
\left(I-Q_{n_0}^T\right)^{-1} H^T
\frac{\partial{\mathcal{L}}}{\partial p( \cdot | n_0, \pi)} 
\]
that we similarly approximate with
\begin{align*}
\bvec{y}_0 &= H^T
\frac{\partial{\mathcal{L}}}{\partial p( \cdot | n_0, \pi)},
&
\bvec{y}_{k+1} &= H^T
\frac{\partial{\mathcal{L}}}{\partial p( \cdot | n_0, \pi)} + Q_{n_0}^T \bvec{y}_k.
\end{align*}
Note that both iteration procedures are guaranteed to converge because the matrix $I - Q_{n_0}$ is diagonally dominant \citep{saad2003iterative}.
Conveniently, we implement $Q_{n_0} \bvec{x}_k$ using the tensor product described above, and JAX automatically translates this into a computation of the transposed matrix-vector product $Q_{n_0}^T \bvec{y}_k$ using automatic differentiation.

If the GFSA policy assigns a very large probability to the \scsym{Backtrack} action, this can lead to numerical instability when computing the final adjacency matrix, since we condition on non-backtracking trajectories when computing our final adjacency matrix. We circumvent this issue by constraining the policy such that a small fraction of the time ($\varepsilon_\text{bt-stop}$), if it attempts to take the \scsym{Backtrack} action, it instead takes the \scsym{Stop} action; this ensures that, if the policy backtracks with high probability, the weight of the produced edges will be low. Additionally, we attempt to mitigate floating-point precision issues during normalization by summing over \scsym{AddEdgeAndStop} and \scsym{Stop} actions instead of computing $1 - p(a_t = \scsym{Backtrack} | \cdots)$ directly.

When computing an adjacency matrix from the outputs of the GFSA layer, there are two ways to extract multiple edge types. The first is to associate each edge type with a distinct starting state in $Z$. The second way is to compute a different version of the parameter vector $\theta$ for each edge type. For the variable misuse experiments, we use the first method, since sharing states uses less memory. For the grid-world experiments, we use the second method, as we found that using non-shared states gives slightly better performance and results in more interpretable learned options.

\subsection{Asymptotic Complexity}

We now give a brief complexity analysis for the GFSA layer implementation. Let $n$ be the number of nodes, $e$ be the number of edges, $z$ be the number of memory states, $\omega$ be the number of ``static'' observations per node (such as \scsym{FROM-PARENT}), and $\gamma$ be the number of ``dynamic'' observations per node (for instance, observations conditioned on learned node embeddings).

\textbf{Memory:} Storing the probability of visiting each state in $X \times Z$ for a given source node takes memory $O(n \omega z)$, so storing it for all source nodes takes $O(n^2 \omega z)$.
For a dense representation of $Q$ and $C$, $Q$ takes $O(n^2 \omega^2 z^2 \gamma)$ memory and $C$ takes $O(n^2 \omega \gamma)$. Since memory usage is independent of the number of iterations, the overall memory cost is thus $O(n^2 \omega^2 z^2 \gamma)$. We note that memory scales proportional to the square of the number of nodes, but this is in a sense unavoidable since the output of the GFSA layer is a dense $n \times n$ matrix even if $Q$ is sparse.

\textbf{Time:} Computing the tensor product for all starting nodes requires computing $n \times n \omega z \times n \omega z \times \gamma$ elements. Since we do this at every iteration, we end up with a time cost of $O(T_{\max{}} n^3 \omega^2 z^2 \gamma)$. We note that a sparse representation of $Q$ might reduce this to $O(T_{\max{}} (n^2 z + n e z^2) \omega \gamma)$ (since we could first contract with C with cost $n \times n \omega \times z \times \gamma$ and then iterate over edges, start nodes, observations, and states, with cost $n \times e \times \omega z\times z \times \gamma$). However, in practice we use a dense implementation to take advantage of fast accelerator hardware.

\section{Experiments, Hyperparameters, and Detailed Results}

Here we describe additional details for each of our experiments and the corresponding evaluation results. For all of our experiments, we train and evaluate on TPU v2 accelerators.\footnote{\url{https://cloud.google.com/tpu/}} Each training job uses 8 TPU v2 cores, evenly dividing the batch size between the cores and averaging gradients across them at each step.

\subsection{Grid-world Task} \label{appendix:gridworld}
We use the LabMaze generator (\url{https://github.com/deepmind/labmaze}) from DeepMind Lab \citep{beattie2016deepmind} to generate our grid-world layouts. We configure it with a width and height of 19 cells, a maximum of 6 rooms, and room sizes between 3 and 6 cells on each axis. We then convert the generated grids into graphs, and filter out examples with more than 256 nodes or 512 node-observation tuples. We generate 100,000 training graphs and 100,000 validation graphs. For each graph, we then pick 32 goal locations uniformly at random.

We configure the GFSA layer to use four independent policies to produce four derived edge types. For each policy, we set the memory space to the two-element set $Z = \{0, 1\}$, where $z_0 = 0$. We initialize parameters using temperature $\beta = 0.2$, but use the Dirichlet sample directly as a logit, i.e. $\theta_i = q_i$. (We found that applying the logarithm from appendix \ref{appendix:gfsa-impl} during initialization yields similar numerical performance but makes the learned policies harder to visualize.)

Since the interpretation of the edges as options requires them to be properly normalized (i.e. the distribution $p(s_{t+1} | s_t, a_t)$ must be well defined), we make a few modifications to the output adjacency matrix produced by the GFSA layer. In particular, we do not use the learned adjustment parameters described in section \ref{subsec:derived-adjacency-matrix}, instead fixing $a = 1, b = 0$. We also ensure that the edge weights are normalized to 1 for each source node by assigning any missing mass to the diagonal. In other words, if the GFSA sub-policy agent takes a \scsym{Stop} action or fails to take the \scsym{AddEdgeAndStop} action before $T_{\max}$ iterations, we instead treat the option as a no-op that causes the primary agent to remain in place. We also remove the \scsym{Backtrack} action.

At each training iteration, we sample a graph $G$ from our training set, and in parallel compute approximate entropy-regularized optimal policies $\pi^*$ for each of the 32 goal locations for $G$.
Mathematically, for each goal $g$, we seek
\[
\pi^* = \argmax_\pi\, \mathbb{E}_{(s_t, a_t) \sim p(\cdot ~|~ \pi)}\left[ \sum_{0 \le t \le T} -1 + \mathcal{H}(\pi(\cdot ~|~ s_t)) ~\middle|~ s_T = g \right],
\]
where $\mathcal{H}(\pi(\cdot ~|~ s_t))$ denotes the entropy of the distribution over actions, and we have fixed the reward to -1 for all timesteps. We use an entropy-regularized objective here so that the policy $\pi^*$ is nondeterministic and thus has a useful derivative with respect to the option distribution. As described by \citet{haarnoja2017reinforcement}, we can compute this optimal policy by doing soft-Q iteration using the update equations
\begin{align*}
    Q_\text{soft}(s_t, a_t)
    &\leftarrow \E_{s_{t+1} \sim p(\cdot | s_t, a_t)}\left[ V_\text{soft}(s_{t+1}) \right] - 1,
    \\
    V_\text{soft}(s_t) &\leftarrow \log \sum_{a_t} \exp \left(  Q_\text{soft}(s_t, a_t) \right).
\end{align*}
until reaching a fixed point, and then letting
\[
\pi^*(a_t | s_t) = \exp(Q_\text{soft}(s_t, a_t) - V_\text{soft}(s_t)).
\]
Since our graphs are small, we can store $Q_\text{soft}$ and $V_\text{soft}$ in tabular form, and directly solve for their optimal values by iterating the above equations. In practice, we approximate the solution by using 512 iterations.

After computing $Q^{(g)}_\text{soft}, V^{(g)}_\text{soft},$ and $\pi_{(g)}^*$ for each choice of $g$, we then define a minimization objective for the full task as
\[
\mathcal{L} = - \E_{s_0, g}\left[ V^{(g)}_\text{soft}(s_0) \right],
\]
i.e. we seek to maximize the soft value function across randomly chosen sources and goals, or equivalently to minimize the expected number of steps taken by $\pi_{(g)}^*$ before reaching the goal. We compute gradients by using implicit differentiation twice: first to differentiate through the fixed point to the soft-Q iteration, and second to differentiate through the computation of the GFSA edges. Implicitly differentiating through the soft-Q equations is conceptually similar to implicit MAML \citep{rajeswaran2019meta} except that the parameters we optimize in the outer loop (the GFSA parameters) are not the same as the parameters we optimize in the inner loop (the graph-specific tabular policy).

Note that differentiating through the soft-Q fixed point requires first linearizing the equations around the fixed point. More specifically, if we express the fixed point equations in terms of a function $V_\text{soft} = f(V_\text{soft}, \theta)$ where $\theta$ represents the GFSA parameters, we have
\[
\partial V_\text{soft} = \partial f(V_\text{soft}, \theta) = f_{V}(V_\text{soft}, \theta) \partial V_\text{soft} + f_{\theta}(V_\text{soft}, \theta) \partial \theta
\]
(where $f_{V}(V_\text{soft}, \theta)$ denotes the Jacobian of $f$ with respect to $V$, and similarly for $f_\theta$) and thus
\[
\partial V_\text{soft} = \big(I - f_{V}(V_\text{soft}, \theta)\big)^{-1} f_{\theta}(V_\text{soft}, \theta) \partial \theta
\]
which leads to gradient equations
\[
\frac{\partial\mathcal{L}}{\partial \theta} = f_{\theta}(V_\text{soft}, \theta)^T \big(I - f_{V}(V_\text{soft}, \theta)^T\big)^{-1} \frac{\partial\mathcal{L}}{\partial{V_\text{soft}}}.
\]
As before, JAX makes it possible to easily express these gradient computations and automatically handles the computation of the relevant partial derivatives and Jacobians. In this case, due to the small number of goal locations and lack of diagonal dominance guarantees, we simply compute and invert the matrix $I - f_{V}(V_\text{soft}, \theta)^T$ during the backward pass instead of using an iterative solver. (See \citet{liao2018reviving} for additional information about implicitly differentiating through fixed points.)

We trained the model using the Adam optimizer, with a learning rate of 0.001 and a batch size of 32 graphs with 32 goals each, for approximately 50,000 iterations, until the validation loss plateaued. We then picked a grid from the validation set, and chose four possible starting locations manually to give a summary of the overall learned behavior.

\subsection{Static Analyses} \label{appendix:edge-classification}

\subsubsection{Datasets}

For the static analysis tasks, we first generate a dataset of random Python programs using a probabilistic context free grammar. This grammar contains a variety of nonterminals and associated production rules:

\begin{itemize}
    \item \textbf{Number}: An integer or float expression. Either a variable dereference, a constant integer, an arithmetic operation, or a function call with numeric arguments.
    \item \textbf{Boolean}: A boolean expression. Either a comparison between numbers, a constant \texttt{True} or \texttt{False}, or a boolean combination using \texttt{and} or \texttt{or}.
    \item \textbf{Statement}: A single statement. Either an assignment, a call to \texttt{print}, an if, if-else, for, or while block, or a \texttt{pass} statement.
    \item \textbf{Block}: A contiguous sequence of statements that may end in a \texttt{return}, \texttt{break}, or \texttt{continue}, or with a normal statement; we only allow these statements at the end of a block to avoid producing dead code.
\end{itemize}

\begin{figure}
\centering
\small
\begin{center}
\begin{BVerbatim}
def generated_function(a, b):
    for v2 in range(int(bar_2(a, b))):
        v3 = foo_4(v2, b, bar_1(b), a) / 42
        v3 = (b + 8) * foo_1((v2 * v3))
        pass
        v3 = b
        while False:
            a = v3
            a = v3
            v2 = 34
            break
        if bar_1((b * b)) != v2:
            v4 = foo_4(bar_2(56, bar_1(v2)),
                       foo_4(b, a, a, 39) - v2, 
                       bar_1(a), 32)
            for v5 in range(int(v4)):
                v6 = v4
                pass
                break
            print(69)
        v2 = v2
    b = 15
    b = ((a + 96) + 89) - a
    v2 = foo_4(b, 21, 26, foo_4(85, a, a - b, a))
    b = v2 - (a - v2)
\end{BVerbatim}
\end{center}
\caption{Example of a program from the ``1x'' program distribution.}
\label{fig:appendix:prog-dist-1x}
\end{figure}

\begin{figure}
\centering
\small
\begin{center}
\begin{BVerbatim}
def generated_function(a, b):
    if bar_1(b) > b:
        b = a
        print(b)
    else:
        a = a + a
    a = bar_1(62 - 35)
    if b <= bar_1(54):
        b = b
        while a >= 58:
            b = foo_1(a)
            pass
            pass
    else:
        a = bar_4(b, bar_1(b), bar_1(a * a), bar_1(a))
    b = 88
\end{BVerbatim}
\end{center}
\caption{Example of a program from the ``0.5x'' program distribution.}
\label{fig:appendix:prog-dist-0pt5x}
\end{figure}

\begin{figure}
\centering
\scriptsize
\begin{center}
\begin{BVerbatim}
def generated_function(a, b):
    v2 = b
    pass
    b = v2
    pass
    v2 = b
    b = bar_1(v2)
    v3 = v2
    print(b)
    b = bar_1(v3) + (bar_1(20) - v2)
    print(56)
    if (foo_2(v2 + v3, foo_1(a)) == a) or ((foo_2(b, v2) < 22) or (v2 >= a or 37 <= v3)):
        v3 = b
        print(v2)
        print(foo_1(a))
        a = v3
        b = foo_1(a)
        v4 = foo_1(b)
        print(foo_1(v3) * bar_1(v2))
        b = a
        print(bar_1(v2))
        v2 = v2
        v4 = 67
        v5 = bar_2((v4 + v2) / (a / b), b)
    else:
        v4 = v3
        b = v4
        while ((v2 + (a / v4)) * (foo_2(93, v2) + v2)) < ((a - v2) - 18):
            v5 = v2
            v6 = bar_2(v3, (a - v4) + v3)
            a = v2 + v2
            break
        v5 = 71 / v2
        v6 = (a + (b + 47)) - (foo_2(v2, a) / (v5 * v3))
        b = v5
        b = foo_2(v3, v4)
        v5 = 14
        v3 = v3
    v4 = b * foo_1(b)
    v5 = bar_4(v2, v3, v4, b)
    v6 = a
    v3 = v5
    b = 11
    v7 = foo_1(v2)
    v8 = v4
    v4 = foo_1(foo_1(b))
    v4 = bar_1(bar_1(bar_2(32, v5)))
    v5 = bar_1(bar_2(v2, v2))
\end{BVerbatim}
\end{center}
\caption{Example of a program from the ``2x'' program distribution.}
\label{fig:appendix:prog-dist-2x}
\end{figure}

We apply constraints to the generation process such that variable names are only used after they have been defined, expressions are limited to a maximum depth, and statements continue to be generated until reaching a target number of AST nodes. For the training dataset, we set this target number of nodes to 150, and convert each generated AST into a graph according to \ref{appendix:encoding-graphs-ast}; we then throw out graphs with more than 256 graph nodes or 512 node-observation tuples. For our test datasets, we use a target AST size of 300 AST nodes and cutoffs of 512 graph nodes or 1024 node-observation tuples for the ``2x'' dataset, and a target of 75 AST nodes and cutoffs of 128 graph nodes and 512 tuples for the ``0.5x'' dataset. For each dataset, the graph size cutoff results in keeping approximately 95\% of the generated ASTs. Figures \ref{fig:appendix:prog-dist-1x},  \ref{fig:appendix:prog-dist-0pt5x}, and \ref{fig:appendix:prog-dist-2x} show example programs from these distributions.

We generated a training dataset with 100,000 programs, a validation dataset of 1024 programs, and a test dataset of 100,000 programs for each of the three sizes (1x, 0.5x, 2x).

\subsubsection{Architectures and Hyperparameters}\label{appendix:subsec:edge-clf-architecture}

We configure the GFSA layer to produce a single edge type, corresponding to the target edge of interest. For this task, we specify the the task-specific observation $\gamma$ referenced in appendix \ref{appendix:encoding-graphs-ast} such that the agent can observe when its current node is a variable with the same identifier as the initial node.  We treat $|Z|$ as a hyperparameter, varying between 2, 4, and 8, with a fixed starting state $z_0$. We additionally randomly sample the backtracking stability hyperparameter $\varepsilon_\text{bt-stop}$ according to a log-uniform distribution within the range $[0.001, 0.1]$ (see appendix \ref{appendix:gfsa-impl}). We initialize parameters with temperature $\beta = 0.01$. Since we choose an optimal threshold while computing the F1 score, we do not use the learned adjustment parameters described in section \ref{subsec:derived-adjacency-matrix}, and instead fix $a = 1, b=0$.

For the GGNN, GREAT, and RAT baselines, we evaluate with both ``nodewise" and ``dot-product" heads. For the "nodewise" head, we compute outputs as
\[
A_{n, n'} = \sigma\big(\big[f_\theta(X_\text{node} + \bvec{b}^T \bvec{\delta}_{n}, X_\text{edge})\big]_{n'}\big)
\]
where the learned model $f_\theta : \R^{d \times |N|} \times \R^{e \times |N| \times |N|} \to \R^{|N|}$ produces a scalar output for each node, $X_\text{node} \in \R^{d \times |N|}$ and $X_\text{edge} \in \R^{e \times |N| \times |N|}$ are embeddings of the node and edge features, $\bvec{\delta}_{n}$ is a one-hot vector indicating the start node, and $\bvec{b}$ is a learned start node embedding. For the ``dot-product'' head, we instead compute
\begin{align*}
Y &= f_\theta(X_\text{node}, X_\text{edge}),
&
A_{n, n'} = \sigma\big( \bvec{y}_{n}^T W \bvec{y}_{n'} + b\big),
\end{align*}
where the learned model $f_\theta : \R^{d \times |N|} \times \R^{e \times |N| \times |N|} \to \R^{d \times |N|}$ produces updated node embeddings $\bvec{y}_{n}$, $W$ is a learned $d \times d$ matrix, and $b$ is a learned scalar bias. Since the nodewise models require $|N|$ times as many more forward passes to compute edges for a single example, we keep training time manageable by reducing the width relative to the dot-product models.

The RAT and GREAT models are both variants of a transformer applied to the nodes of a graph. Both models use a set of attention heads, each of which compute query and key vectors $\bvec{q}_n, \bvec{k}_n \in \R^d$ for each node $n$ as linear transformations of the node features $\bvec{x}_n$:  $\bvec{q}_n = W^Q \bvec{x}_n, \bvec{k}_n = W^K \bvec{x}_n$. The RAT model computes attention logits as
\[
y_{(n, n')} = \frac{\bvec{q}_n^T \big( \bvec{k}_{n'} + W^{EK} \bvec{e}_{(n, n')} \big)}{\sqrt{d}}
\]
where we transform the edge features $\bvec{e}_{(n, n')}$ into an ``edge key'' that can be attended to by the query in addition to the content-based key. This corresponds to the attention equations as described by \citet{shaw2018self}, but with a graph-based mechanism for choosing the pairwise key vector. The GREAT model uses an easier-to-compute formulation
\[
y_{(n, n')} = \frac{\bvec{q}_n^T \bvec{k}_{n'} + \bvec{w}^T \bvec{e}_{(n, n')} \cdot \mathbf{1}^T \bvec{k}_{n'}}{\sqrt{d}}
\]
where the attention logits are biased by a (learned) linear projection of the edge features, scaled by a (fixed) linear projection of the key ($\mathbf{1}$ denotes a vector of ones). In both models, the $y_{(n, n')}$ are converted to attention weights $\alpha_{(n, n')}$ using softmax, and used to compute a weighed average of embedded values. However, in the RAT model, both nodes and edges contribute to values ($z_n = \sum_{n'} \alpha_{(n, n')} (\bvec{v}_{n'} + W^{EV} \bvec{e}_{(n, n')})$), whereas in GREAT this sum is only over nodes ($z_n = \sum_{n'} \alpha_{(n, n')} \bvec{v}_{n'}$).

For the NRI-encoder-based model, we make multiple adjustments to the formulation from \citet{kipf2018neural} in order to apply it to our setting. Since we are adding edges to an existing graph, the first part of our NRI model combines aspects from the encoder and decoder described in \citet{kipf2018neural}; we express our version in terms of blocks that each compute
\begin{align*}
\bvec{h}^{i+1}_{(n, n')} &= \sum_{k} e_{k,(n,n')} f_e^{i, k}(\bvec{h}^{i}_{n}, \bvec{h}^{i}_{n'}),
&
\bvec{h}^{i+1}_{n} &= f^i_v\left(\sum_{n'} \bvec{h}^{i+1}_{(n, n')} \right).
\end{align*}
where $\bvec{h}^{i}_{n}$ denotes the vector of node features after layer $i$, $\bvec{h}^{i+1}_{(n, n')}$ denotes the vector of hidden pairwise features, and $e_{k,(n,n')}$ is the $k$th edge feature between $n$ and $n'$ from the base graph. To enable deeper models, we apply layer normalization and residual connections after each of these blocks, as in \citet{vaswani2017attention}. We then compute the final output head by applying the sigmoid activation to the final layer's hidden pairwise feature matrix $\bvec{h}^{I}_{(n, n')}$ (which we constrain to have feature dimension 1), replacing the softmax used in the original NRI encoder (since we are doing binary classification, not computing a categorical latent variable). All versions of $f$ are learned MLPs with ReLU activations.

The RL agent baseline uses the same parameterization as the GFSA layer. However, instead of exactly solving for marginals, we sample a discrete transition at every step. Given a particular start node, the agent gets a reward of +1 if it takes the \scsym{AddEdgeAndStop} action at any of the correct destination nodes, or if it takes the \scsym{Stop} action and there was no correct destination node. We use 20 rollouts per start node, and train with REINFORCE and a leave-one-out control variate. During final evaluation, we compute exact marginals as for the GFSA layer; thus, differences in evaluation results reflect differences in the learning algorithm only.

For all of our baselines, we convert the Python AST into a graph by transforming the AST nodes into graph nodes and the field relationships into edges. For parity with the GFSA layer, the helper nodes defined in appendix \ref{appendix:encoding-graphs-ast} are also used in the the baseline graph representation, and we add an extra edge type connecting variables that use the same identifier. All edges are embedded in both forward and reverse directions.
We include hyperparameters for whether the initial node embeddings $X_\text{node}$ contain positional encodings computed as in \citet{vaswani2017attention} according to a depth-first tree traversal, and whether edges are embedded using a learned vector or using a one-hot encoding.

For the GGNN model, we choose a number of GGNN iterations (between 4 and 12 iterations using the same parameters) and a hidden state dimension (from $\{16, 32, 128\}$ for the nodewise models or $\{128, 256, 512\}$ for the dot-product models).

For the GREAT and RAT models, we choose a number of layers (between 4 and 12, but not sharing parameters), a hidden state dimension (from $\{16, 32, 128\}$ for the nodewise models or $\{128, 256, 512\}$ for the dot-product models), and a number of self-attention heads (from $\{2, 4, 8, 16\}$), with query, key, and value sizes chosen so that the sum of sizes across all heads matches the hidden state dimension.

For the NRI encoder model, we choose whether to allow communication between non-adjacent nodes, a hidden size for node features (from $\{128, 256, 512\}$), a hidden size for intermediate pairwise features (from $\{16, 32, 64\}$), a hidden size for initial base-graph edge features (from $\{16, 32, 64\}$), a depth for each MLP (from 1 to 5 layers), and a number of NRI-style blocks (between 4 and 12).

\subsubsection{Training and Detailed Results}\label{appendix:edge-classification-detailed-results}

For all of our models, we train using the Adam optimizer for either 500,000 iterations or 24 hours, whichever comes first; this is enough time for all models to converge to their final accuracy. For each model version and task, we randomly sample 32 hyperparameter settings, and then select the model and early-stopping point with the best F1 score on a validation set of 1024 functions. In addition to the hyperparameters described above, all models share the following hyperparameters: batch size (either 8, 32, or 128), learning rate (log-uniform in $[10^{-5}, 10^{-2}]$), gradient clipping threshold (log-uniform in $[1, 10^4]$), and focal-loss temperature $\gamma$ (uniform in $[0, 5]$). Hyperparameter settings that result in out-of-memory errors are not counted toward the 32 samples.

After selecting the best performing model for each model type and task based on performance on the validation set, we evaluated the model on each of our test datasets. For each example size (1x, 2x, 0.5x), we partitioned the 100,000 test examples into 10 equally-sized folds. We used the first fold to tune the final classifier threshold to maximize F1 score (using a different threshold for each example size to account for shifts in the distribution of model outputs). We then fixed that threshold and evaluated the F1 score on each of the other splits. We report the mean of the F1 score across those folds, along with an approximate standard error estimate (computed by dividing the standard deviation of the F1 score across folds by $\sqrt{9} = 3$).

To assess robustness of convergence, we also compute the fraction of training runs that achieve at least 90\% accuracy on the validation set. Note that each training job has different hyperparameters but also a different parameter initialization and a different dataset iteration order; we do not attempt to distinguish between these sources of variation.

\begin{table}[hp]
\centering
\caption{Full-precision results on static analysis tasks. Expressed as mean F1 score (in \%) $\pm$ standard error on test set. For 1x dataset size, we also report fraction of training jobs across hyperparameter sweep that achieved 90\% validation accuracy.}
\label{appendix:tab:edge_classification}
\ssmall
\vspace{1em}
\begin{tabular}{rrrr}
\toprule
\textbf{Task} & \multicolumn{3}{c}{Next Control Flow}  \\
\cmidrule(r{4pt}){2-4}
\textbf{Example size} &
\multicolumn{1}{c}{1x}&
\multicolumn{1}{c}{2x}&
\multicolumn{1}{c}{0.5x}\\
\midrule
\multicolumn{4}{c}{\textbf{100,000 training examples}}\\
\midrule

\emph{RAT nw} &
99.9837 $\pm$ 0.0006~~(25/32 @ 90\%) & 99.9367 $\pm$ 0.0012 & 99.9880 $\pm$ 0.0007 \\
\emph{GREAT nw} &
99.9770 $\pm$ 0.0011~~(26/32 @ 90\%) & 99.8709 $\pm$ 0.0013 & 99.9834 $\pm$ 0.0010 \\
\emph{GGNN nw} &
99.9823 $\pm$ 0.0007~~(31/32 @ 90\%) & 93.9034 $\pm$ 0.0304 & 97.7723 $\pm$ 0.0246 \\
\emph{RAT dp} &
99.9945 $\pm$ 0.0004~~(26/32 @ 90\%) & 92.5278 $\pm$ 0.0080 & 96.5901 $\pm$ 0.0150 \\
\emph{GREAT dp} &
99.9941 $\pm$ 0.0006~~(24/32 @ 90\%) & 96.3243 $\pm$ 0.0092 & 98.3557 $\pm$ 0.0081 \\
\emph{GGNN dp} &
99.9392 $\pm$ 0.0014~~(26/32 @ 90\%) & 62.7524 $\pm$ 0.0195 & 98.5104 $\pm$ 0.0176 \\
\emph{NRI encoder} &
99.9765 $\pm$ 0.0010~~(31/32 @ 90\%) & 85.9087 $\pm$ 0.0156 & 99.9161 $\pm$ 0.0021 \\
\emph{RL ablation} &
94.2419 $\pm$ 0.0118~~(02/32 @ 90\%) & 93.5616 $\pm$ 0.0087 & 94.8329 $\pm$ 0.0241 \\
\emph{GFSA Layer (ours)} &
\textbf{99.9972 $\pm$ 0.0001}~~(29/32 @ 90\%) & \textbf{99.9941 $\pm$ 0.0002} & \textbf{99.9985 $\pm$ 0.0002} \\

\midrule
\multicolumn{4}{c}{\textbf{100 training examples}}\\
\midrule

\emph{RAT nw} &
98.6324 $\pm$ 0.0090~~(13/32 @ 90\%) & 95.9320 $\pm$ 0.0092 & 96.3167 $\pm$ 0.0249 \\
\emph{GREAT nw} &
98.2327 $\pm$ 0.0054~~(13/32 @ 90\%) & 97.9814 $\pm$ 0.0071 & 98.5181 $\pm$ 0.0065 \\
\emph{GGNN nw} &
99.3749 $\pm$ 0.0060~~(25/32 @ 90\%) & 98.3590 $\pm$ 0.0050 & 98.6022 $\pm$ 0.0141 \\
\emph{RAT dp} &
81.8068 $\pm$ 0.0296~~(00/32 @ 90\%) & 68.4592 $\pm$ 0.0187 & 87.0517 $\pm$ 0.0334 \\
\emph{GREAT dp} &
86.5967 $\pm$ 0.0216~~(00/32 @ 90\%) & 62.9828 $\pm$ 0.0245 & 80.5810 $\pm$ 0.0192 \\
\emph{GGNN dp} &
76.8530 $\pm$ 0.0388~~(00/32 @ 90\%) & 22.9947 $\pm$ 0.0083 & 28.9142 $\pm$ 0.0520 \\
\emph{NRI encoder} &
81.7358 $\pm$ 0.0347~~(00/32 @ 90\%) & 69.0823 $\pm$ 0.0216 & 88.8749 $\pm$ 0.0452 \\
\emph{RL ablation} &
91.6981 $\pm$ 0.0122~~(03/32 @ 90\%) & 91.1424 $\pm$ 0.0120 & 92.2917 $\pm$ 0.0215 \\
\emph{GFSA Layer (ours)} &
\textbf{99.9944 $\pm$ 0.0002}~~(29/32 @ 90\%) & \textbf{99.9890 $\pm$ 0.0003} & \textbf{99.9971 $\pm$ 0.0004} \\

\bottomrule
\end{tabular}%
\hspace{0.2\textwidth}

\vspace{1em}

\begin{tabular}{rrrr}
\toprule
\textbf{Task} & \multicolumn{3}{c}{Last Read}  \\
\cmidrule(r{4pt}){2-4}
\textbf{Example size} &
\multicolumn{1}{c}{1x}&
\multicolumn{1}{c}{2x}&
\multicolumn{1}{c}{0.5x}\\
\midrule
\multicolumn{4}{c}{\textbf{100,000 training examples}}\\
\midrule

\emph{RAT nw} &
99.8602 $\pm$ 0.0020~~(11/32 @ 90\%) & 96.2865 $\pm$ 0.0083 & \textbf{99.9785 $\pm$ 0.0008} \\
\emph{GREAT nw} &
99.9099 $\pm$ 0.0015~~(14/32 @ 90\%) & 95.1157 $\pm$ 0.0100 & \textbf{99.9801 $\pm$ 0.0006} \\
\emph{GGNN nw} &
95.5197 $\pm$ 0.0121~~(04/32 @ 90\%) & 9.2216 $\pm$ 0.0658 & 86.2371 $\pm$ 0.0310 \\
\emph{RAT dp} &
99.9579 $\pm$ 0.0011~~(18/32 @ 90\%) & 42.5754 $\pm$ 0.0139 & 91.9595 $\pm$ 0.0325 \\
\emph{GREAT dp} &
\textbf{99.9869 $\pm$ 0.0005}~~(18/32 @ 90\%) & 47.0747 $\pm$ 0.0193 & 99.7819 $\pm$ 0.0028 \\
\emph{GGNN dp} &
98.4356 $\pm$ 0.0063~~(05/32 @ 90\%) & 0.9925 $\pm$ 0.0004 & 63.7686 $\pm$ 0.0940 \\
\emph{NRI encoder} &
99.8306 $\pm$ 0.0024~~(14/32 @ 90\%) & 43.4380 $\pm$ 0.0220 & 99.3851 $\pm$ 0.0051 \\
\emph{RL ablation} &
96.6928 $\pm$ 0.0131~~(02/32 @ 90\%) & 94.8530 $\pm$ 0.0164 & 97.8541 $\pm$ 0.0091 \\
\emph{GFSA Layer (ours)} &
99.6561 $\pm$ 0.0030~~(25/32 @ 90\%) & \textbf{98.9355 $\pm$ 0.0056} & 99.8973 $\pm$ 0.0020 \\

\midrule
\multicolumn{4}{c}{\textbf{100 training examples}}\\
\midrule

\emph{RAT nw} &
80.2832 $\pm$ 0.0257~~(00/32 @ 90\%) & 1.1217 $\pm$ 0.0021 & 83.4938 $\pm$ 0.0284 \\
\emph{GREAT nw} &
78.8755 $\pm$ 0.0220~~(00/32 @ 90\%) & 6.9583 $\pm$ 0.0157 & 60.9003 $\pm$ 0.0375 \\
\emph{GGNN nw} &
79.3594 $\pm$ 0.0350~~(00/32 @ 90\%) & 28.2760 $\pm$ 0.3023 & 5.6617 $\pm$ 0.0095 \\
\emph{RAT dp} &
59.5289 $\pm$ 0.0174~~(00/32 @ 90\%) & 28.9121 $\pm$ 0.0076 & 62.2680 $\pm$ 0.0500 \\
\emph{GREAT dp} &
57.0199 $\pm$ 0.0378~~(00/32 @ 90\%) & 27.1285 $\pm$ 0.0161 & 64.4819 $\pm$ 0.0339 \\
\emph{GGNN dp} &
44.3653 $\pm$ 0.0182~~(00/32 @ 90\%) & 9.6449 $\pm$ 0.0060 & 38.3370 $\pm$ 0.0223 \\
\emph{NRI encoder} &
68.6947 $\pm$ 0.0390~~(00/32 @ 90\%) & 26.6422 $\pm$ 0.0172 & 73.5216 $\pm$ 0.0312 \\
\emph{RL ablation} &
98.4823 $\pm$ 0.0087~~(06/32 @ 90\%) & 97.0341 $\pm$ 0.0141 & 99.1689 $\pm$ 0.0089 \\
\emph{GFSA Layer (ours)} &
\textbf{98.8141 $\pm$ 0.0069}~~(25/32 @ 90\%) & \textbf{97.8198 $\pm$ 0.0079} & \textbf{99.2172 $\pm$ 0.0048} \\

\bottomrule
\end{tabular}

\vspace{1em}

\hspace{0.2\textwidth}%
\begin{tabular}{rrrr}
\toprule
\textbf{Task} & \multicolumn{3}{c}{Last Write}  \\
\cmidrule(r{4pt}){2-4}
\textbf{Example size} &
\multicolumn{1}{c}{1x}&
\multicolumn{1}{c}{2x}&
\multicolumn{1}{c}{0.5x}\\
\midrule
\multicolumn{4}{c}{\textbf{100,000 training examples}}\\
\midrule

\emph{RAT nw} &
99.8333 $\pm$ 0.0021~~(22/32 @ 90\%) & 94.8665 $\pm$ 0.0172 & \textbf{99.9741 $\pm$ 0.0012} \\
\emph{GREAT nw} &
99.7538 $\pm$ 0.0043~~(16/32 @ 90\%) & 93.2187 $\pm$ 0.0181 & 99.9343 $\pm$ 0.0022 \\
\emph{GGNN nw} &
98.8240 $\pm$ 0.0080~~(09/32 @ 90\%) & 40.6941 $\pm$ 0.0302 & 88.2834 $\pm$ 0.0281 \\
\emph{RAT dp} &
99.9815 $\pm$ 0.0006~~(19/32 @ 90\%) & 68.9617 $\pm$ 0.0169 & 99.7626 $\pm$ 0.0045 \\
\emph{GREAT dp} &
\textbf{99.9868 $\pm$ 0.0007}~~(18/32 @ 90\%) & 68.4564 $\pm$ 0.0188 & 99.8809 $\pm$ 0.0029 \\
\emph{GGNN dp} &
99.3488 $\pm$ 0.0040~~(13/32 @ 90\%) & 38.3976 $\pm$ 0.0772 & 94.5246 $\pm$ 0.0576 \\
\emph{NRI encoder} &
99.8710 $\pm$ 0.0019~~(24/32 @ 90\%) & 52.7272 $\pm$ 0.0226 & 99.8390 $\pm$ 0.0058 \\
\emph{RL ablation} &
98.0828 $\pm$ 0.0109~~(03/32 @ 90\%) & 96.6400 $\pm$ 0.0185 & 98.9277 $\pm$ 0.0076 \\
\emph{GFSA Layer (ours)} &
99.4653 $\pm$ 0.0040~~(25/32 @ 90\%) & \textbf{98.7259 $\pm$ 0.0111} & 99.7763 $\pm$ 0.0033 \\

\midrule
\multicolumn{4}{c}{\textbf{100 training examples}}\\
\midrule

\emph{RAT nw} &
79.2705 $\pm$ 0.0212~~(00/32 @ 90\%) & 8.9069 $\pm$ 0.0165 & 83.7914 $\pm$ 0.0379 \\
\emph{GREAT nw} &
80.1879 $\pm$ 0.0273~~(00/32 @ 90\%) & 40.2206 $\pm$ 0.0386 & 84.5417 $\pm$ 0.0312 \\
\emph{GGNN nw} &
91.1302 $\pm$ 0.0196~~(01/32 @ 90\%) & 71.6216 $\pm$ 0.0163 & 91.7911 $\pm$ 0.0272 \\
\emph{RAT dp} &
75.9944 $\pm$ 0.0352~~(00/32 @ 90\%) & 48.0974 $\pm$ 0.0331 & 81.6254 $\pm$ 0.0312 \\
\emph{GREAT dp} &
73.6926 $\pm$ 0.0391~~(00/32 @ 90\%) & 46.2676 $\pm$ 0.0334 & 80.0267 $\pm$ 0.0511 \\
\emph{GGNN dp} &
53.8178 $\pm$ 0.0282~~(00/32 @ 90\%) & 17.8435 $\pm$ 0.0101 & 55.0784 $\pm$ 0.0481 \\
\emph{NRI encoder} &
65.3841 $\pm$ 0.0498~~(00/32 @ 90\%) & 36.4278 $\pm$ 0.0301 & 73.8556 $\pm$ 0.0106 \\
\emph{RL ablation} &
98.3220 $\pm$ 0.0098~~(06/32 @ 90\%) & \textbf{96.9613 $\pm$ 0.0150} & 99.0671 $\pm$ 0.0074 \\
\emph{GFSA Layer (ours)} &
\textbf{98.7144 $\pm$ 0.0072}~~(24/32 @ 90\%) & \textbf{96.9758 $\pm$ 0.0120} & \textbf{99.5543 $\pm$ 0.0068} \\

\bottomrule
\end{tabular}
\end{table}

\begin{figure}[hp]
    \centering
    \includegraphics[
    width=.9\textwidth,trim={5cm 6cm 5cm 6cm},clip
    ]{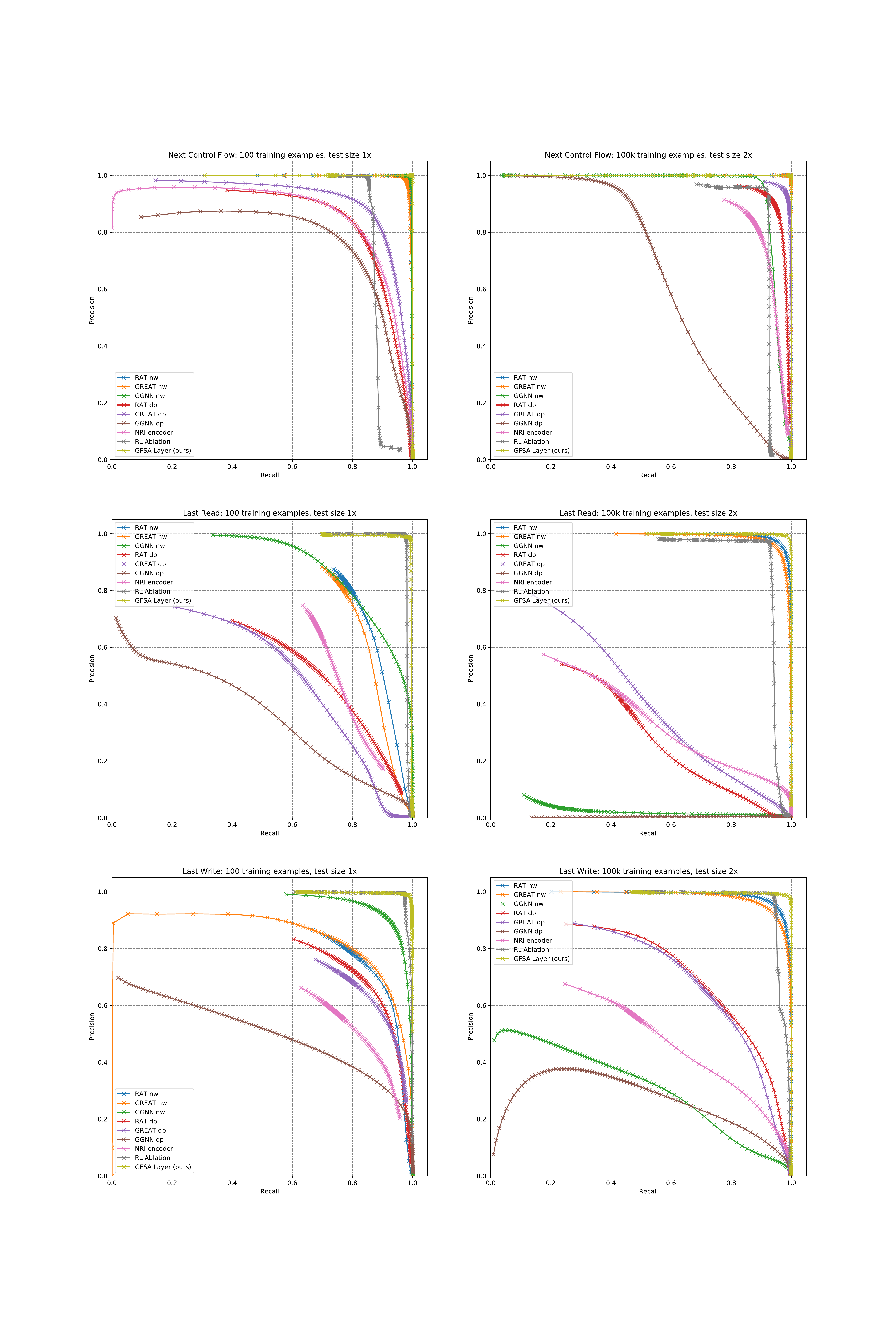}
    \caption{Precision-recall curves for a subset of the static analysis experiments that reveals interesting differences in performance: training on 100 examples and evaluating on the same data distribution, and training on 100,000 examples but evaluating on examples of twice the size. Crosshatches indicate candidate thresholds that were evaluated at test time. Best viewed in color.}
    \label{appendix:fig:edge-classification-pr}
\end{figure}

Table \ref{appendix:tab:edge_classification} contains higher-precision results for the edge-classification tasks, along with the standard error estimates computed as above. Additionally, figure \ref{appendix:fig:edge-classification-pr} shows precision-recall curves, computed for a subset of the experiments that shows the most interesting variation in performance.

\subsection{Variable Misuse} \label{appendix:var-misuse}

\subsubsection{Dataset}
We use the dataset released by \citet{hellendoorn2020global}, which is derived from a redistributable subset of the ETH 150k Python dataset \citep{raychev2016probabilistic}.\footnote{
Original Python corpus (from \citet{raychev2016probabilistic}):  \url{https://www.sri.inf.ethz.ch/py150}\\
Redistributable subset: \url{https://github.com/google-research-datasets/eth_py150_open}\\
With synthetic errors (as released by \citet{hellendoorn2020global}):\\ \url{https://github.com/google-research-datasets/great}}
For each top-level function and class definition extracted from the original dataset, this derived dataset includes up to three modified copies introducing synthetic variable misuse errors, along with an equal number of unmodified copies. For our experiments, we do additional preprocessing to support the GFSA layer: we encode the examples as graphs, and throw out examples with more than 256 nodes or 512 node-observation tuples, which leaves us with 84.5\% of the dataset from \citet{hellendoorn2020global}.

\subsubsection{Model Architectures}
As in the edge classification task, we convert the AST nodes into graph nodes, using the same helper nodes and connectivity structure described in appendix \ref{appendix:encoding-graphs-ast}. For this task, when an AST node has multiple children, we add extra edges specifying the index of each child; this is used only by the attention model, not by the GFSA layer. In addition to node features based on the AST node type, we include features based on a bag-of-subtokens representation of each AST node. We use a 10,000-token subword encoder implemented in the \texttt{Tensor2Tensor} library by \citet{vaswani2018tensor2tensor}, pretrain it on GitHub Python code, and use it to tokenize the syntax for each AST node. We then compute node features by summing over the embedding vectors of all subtokens that appear in each node. The learned embedding vectors are of dimension 128, which we project out to 256 before using as node features.

To ensure that we can compare results across different edge types in a fair way, we fix the sizes of the base models.
For the RAT and GREAT model families, we use a hidden dimension of 256 and 8 attention heads with a per-head query and value dimension of 32.
For the GGNN model family, we use a hidden dimension of 256 and a message dimension of 128.
For all models, we use positional embeddings for node features, and edge types embedded as 64-dimensional vectors.
We embed all edge types separately in the forward and reverse directions, including both the base AST edges as well as any edges added by learned edge layers; for learned edges we compute new edge features by weighting each embedding vector by the associated edge weight. For the ``@ start'' edge types, the edges are all embedded at the same time, and for the ``@ middle'' edge types, we modify the edge features after adding the new edges and use the modified edge features for all following model layers. We compute our final outputs by performing a learned dot-product operation on our final node embeddings $Y$ and then taking a softmax transformation to obtain a distribution over node pairs:
\begin{align*}
Y &= f_\theta(X_\text{node}, X_\text{edge}),
&
Z = \softmax\big(\{ \bvec{y}_{n}^T W \bvec{y}_{n'} \}_{n, n' \in N}\big).
\end{align*}
As described in \citet{vasic2019neural}, we compute a mask that indicates the location of all local variables that could be either bug locations or repair targets (along with the sentinel no-bug location). We then set the entries of $Z_{n, n'}$ to zero for the locations not contained in the mask, and renormalize so that it sums to 1 across node pairs. Note that there is always exactly one correct bug location $n$ but there could be more than one acceptable repair location $n'$; we thus sum over all correct repair locations to compute the total probability assigned to correct bug-repair pairs, and then use the standard cross-entropy loss. 

For the GFSA edges, we use an initialization temperature of $\beta = 0.2$, and fix $|Z| = 4$. We use a single finite-state automaton policy to generate two edge types by computing the trajectories when $z_0 = 0$ as well as when $z_0 = 1$. We set $T_{\max{}} = 128$.

For the NRI head edges, we use a 3-layer MLP (with hidden sizes [32, 32] and output size 2), and take a logistic sigmoid of the outputs, interpreting it as a weighted adjacency matrix for two edge types.

For the uniform random walk edges, we learn a single halting probability $p_\text{halt} = \sigma(\theta_\text{halt})$ along with adjustment parameters $a, b \in \R$ as defined in section \ref{subsec:derived-adjacency-matrix}. The output adjacency matrix is defined similarly to the GFSA model, but with all of the policy parameters fixed to move to a random neighbor with probability $1 - p_\text{halt}$ and take the \scsym{AddEdgeAndStop} action with probability $p_\text{halt}$. For this model, we only add a single edge type.

The RL agent uses the same parameterization as the GFSA layer, but samples a single trajectory for each start node and uses it to add a single edge (or no edge) from each start node. The downstream cross-entropy loss for the classification model is used as the reward for all of these trajectories. Since simply computing this reward requires a full downstream model forward pass, we run only one rollout per example with a learned scalar reward baseline $\hat{R}$. We add an additional loss term $\alpha (R - \hat{R})^2$ so that this learned baseline approximates the expected reward, and scale the REINFORCE gradient term by a hyperparameter $\beta$.

The ``Hand-engineered edges'' baseline uses the base AST edges and adds the following edge types from \citet{allamanis2018learning} and \citet{hellendoorn2020global}:
NextControlFlow,
ComputedFrom,
FormalArgName,
LastLexicalUse,
LastRead,
LastWrite,
NextToken (connecting syntactically adjacent nodes), Calls (connecting function calls to their definitions), and
ReturnsTo (connecting return statements to the function they return from).

\subsubsection{Training and Detailed Results}\label{appendix:var-misuse-detailed-results}

For all of our models, we train using the Adam optimizer for 400,000 iterations; this is enough time for all models to converge to their final accuracy. We use a batch size of 64 examples, grouping examples of similar size to avoid excessive padding.

For each model, we randomly sample 32 hyperparameter settings for the learning rate (log-uniform in $[10^{-5}, 10^{-2}]$) and gradient clipping threshold (log-uniform in $[1, 10^4]$). For the GFSA models, we also tune $\varepsilon_\text{bt-stop}$ (log-uniform in $[0.001, 0.1]$). For the RL ablation, we tune the weight of the relative weights of different gradient terms: $\alpha$ is chosen log-uniformly in $[0.00001, 0.1]$ and $\beta$ is chosen in $[0.001, 2.0]$. Over the course of training, we take a subset of approximately 7000 validation examples and compute the top-1 accuracy of each model on this subset. We then choose the hyperparameter settings and early-stopping point with the highest accuracy.

We evaluate the selected models on the size-filtered test set, containing 818,560 examples. For each example and each model, we determine the predicted classification by determining whether 50\% or more probability is assigned to the no-bug location. For incorrect examples, we then find the pair of predicted bug location and repair identifier with the highest probability (summing over all locations for each candidate repair identifier), and check whether the bug location and replacement identifier are correct. Note that if the model assigns >50\% probability to the no-bug location, but still ranks the true bug and replacement highest with the remaining probability mass, we count that as an incorrect classification but a correct localization and repair.

To compute standard error estimates, we assume that predictions are independent across different functions, but may be correlated across modified copies of the same function; we thus estimate standard error by using the analytic variance for a binomial distribution, adjusted by a factor of 3 (for buggy or non-buggy examples analyzed separately) or 6 (for averages across all examples) to account for the multiple copies of each function in the dataset. Table \ref{appendix:tab:var_misuse} contains higher-precision results for the variable misuse tasks, along with standard error estimates,  a breakdown of marginal localization and repair scores (examples where the model gets one of the locations correct but possibly the other incorrect), and an overall accuracy score capturing classification, localization, and repair.

\begin{table}[p]
\centering
\caption{Full-precision results on variable misuse task, with additional breakdown of accuracy for buggy examples. Expressed as accuracy (in \%) $\pm$ standard error.}
\label{appendix:tab:var_misuse}
\scriptsize

\begin{tabular}{lcccc}
\toprule
Example type: & \multicolumn{2}{c}{All} & No bug & With bug\\
\cmidrule(r{4pt}){2-3}
\cmidrule(r{4pt}){4-4}
\cmidrule(r{4pt}){5-5}
& Classification & Class \& Loc \& Rep & Classification& Classification  \\
\midrule
\textbf{RAT}\\
\emph{Base AST graph only}
 & {92.540 $\pm$ 0.071} & {88.225 $\pm$ 0.087} & {92.051 $\pm$ 0.073} & {93.030 $\pm$ 0.069}\\
\emph{Base AST graph, +2 layers}
 & {92.245 $\pm$ 0.072} & {87.846 $\pm$ 0.088} & {92.455 $\pm$ 0.072} & {92.035 $\pm$ 0.073}\\
\emph{Hand-engineered edges}
 & {92.704 $\pm$ 0.070} & {88.496 $\pm$ 0.086} & {92.932 $\pm$ 0.069} & {92.477 $\pm$ 0.071}\\
\emph{NRI head @ start}
 & {92.880 $\pm$ 0.070} & {88.710 $\pm$ 0.085} & {92.551 $\pm$ 0.071} & {93.208 $\pm$ 0.068}\\
\emph{NRI head @ middle}
 & {92.572 $\pm$ 0.071} & {88.423 $\pm$ 0.086} & {92.834 $\pm$ 0.070} & {92.310 $\pm$ 0.072}\\
\emph{Random walk @ start}
 & {92.997 $\pm$ 0.069} & {88.907 $\pm$ 0.084} & \textbf{93.224 $\pm$ 0.068} & {92.770 $\pm$ 0.070}\\
\emph{RL ablation @ middle}
 & {92.036 $\pm$ 0.073} & {87.278 $\pm$ 0.090} & {90.361 $\pm$ 0.080} & {93.711 $\pm$ 0.066}\\
\emph{GFSA layer (ours) @ start}
 & \textbf{93.328 $\pm$ 0.068} & \textbf{89.472 $\pm$ 0.083} & \textbf{93.101 $\pm$ 0.069} & {93.555 $\pm$ 0.066}\\
\emph{GFSA layer (ours) @ middle}
 & \textbf{93.456 $\pm$ 0.067} & \textbf{89.627 $\pm$ 0.082} & {92.662 $\pm$ 0.071} & \textbf{94.250 $\pm$ 0.063}\\

\midrule
\textbf{GREAT}\\
\emph{Base AST graph only}
 & {91.662 $\pm$ 0.075} & {86.906 $\pm$ 0.091} & {90.849 $\pm$ 0.078} & {92.475 $\pm$ 0.071}\\
\emph{Base AST graph, +2 layers}
 & {92.307 $\pm$ 0.072} & {87.902 $\pm$ 0.087} & \textbf{92.711 $\pm$ 0.070} & {91.903 $\pm$ 0.074}\\
\emph{Hand-engineered edges}
 & {92.287 $\pm$ 0.072} & {87.646 $\pm$ 0.088} & {92.577 $\pm$ 0.071} & {91.996 $\pm$ 0.073}\\
\emph{NRI head @ start}
 & {92.061 $\pm$ 0.073} & {87.447 $\pm$ 0.089} & {91.112 $\pm$ 0.077} & {93.009 $\pm$ 0.069}\\
\emph{NRI head @ middle}
 & {92.074 $\pm$ 0.073} & {87.552 $\pm$ 0.088} & \textbf{92.800 $\pm$ 0.070} & {91.347 $\pm$ 0.076}\\
\emph{Random walk @ start}
 & {92.644 $\pm$ 0.071} & {88.283 $\pm$ 0.087} & {91.872 $\pm$ 0.074} & \textbf{93.417 $\pm$ 0.067}\\
\emph{RL ablation @ middle}
 & {91.707 $\pm$ 0.075} & {86.939 $\pm$ 0.091} & {89.951 $\pm$ 0.081} & \textbf{93.464 $\pm$ 0.067}\\
\emph{GFSA layer (ours) @ start}
 & \textbf{92.963 $\pm$ 0.069} & \textbf{88.825 $\pm$ 0.085} & \textbf{92.872 $\pm$ 0.070} & {93.055 $\pm$ 0.069}\\
\emph{GFSA layer (ours) @ middle}
 & \textbf{93.019 $\pm$ 0.069} & \textbf{88.806 $\pm$ 0.085} & {92.427 $\pm$ 0.072} & \textbf{93.612 $\pm$ 0.066}\\

\midrule
\textbf{GGNN}\\
\emph{Base AST graph only}
 & {89.704 $\pm$ 0.082} & {83.521 $\pm$ 0.098} & \textbf{91.257 $\pm$ 0.076} & {88.152 $\pm$ 0.087}\\
\emph{Base AST graph, +2 layers}
 & {90.359 $\pm$ 0.080} & {84.383 $\pm$ 0.098} & {88.795 $\pm$ 0.085} & \textbf{91.922 $\pm$ 0.074}\\
\emph{Hand-engineered edges}
 & \textbf{90.874 $\pm$ 0.078} & \textbf{84.776 $\pm$ 0.096} & {90.187 $\pm$ 0.081} & {91.560 $\pm$ 0.075}\\
\emph{NRI head @ start}
 & {90.433 $\pm$ 0.080} & {84.473 $\pm$ 0.096} & \textbf{91.486 $\pm$ 0.076} & {89.380 $\pm$ 0.083}\\
\emph{NRI head @ middle}
 & {90.243 $\pm$ 0.080} & {84.412 $\pm$ 0.098} & {88.289 $\pm$ 0.087} & \textbf{92.198 $\pm$ 0.073}\\
\emph{Random walk @ start}
 & {90.315 $\pm$ 0.080} & {84.519 $\pm$ 0.096} & \textbf{91.351 $\pm$ 0.076} & {89.278 $\pm$ 0.084}\\
\emph{RL ablation @ middle}
 & {90.540 $\pm$ 0.079} & \textbf{84.959 $\pm$ 0.096} & {90.437 $\pm$ 0.080} & {90.643 $\pm$ 0.079}\\
\emph{GFSA layer (ours) @ start}
 & \textbf{90.939 $\pm$ 0.078} & \textbf{85.012 $\pm$ 0.096} & {90.083 $\pm$ 0.081} & {91.796 $\pm$ 0.074}\\
\emph{GFSA layer (ours) @ middle}
 & {90.394 $\pm$ 0.080} & \textbf{84.723 $\pm$ 0.096} & {90.983 $\pm$ 0.078} & {89.805 $\pm$ 0.082}\\

\bottomrule
\end{tabular}
\vspace{0.5em}

\begin{tabular}{lcccc}
\toprule
Example type: 
& \multicolumn{4}{c}{With bug}\\
\cmidrule(l{4pt}){2-5}
& Localization & Repair & Loc \& Repair & Class \& Loc \& Rep \\
\midrule
\textbf{RAT}\\
\emph{Base AST graph only}
 & {92.936 $\pm$ 0.069} & {91.892 $\pm$ 0.074} & {88.300 $\pm$ 0.087} & {84.399 $\pm$ 0.098}\\
\emph{Base AST graph, +2 layers}
 & {92.638 $\pm$ 0.071} & {91.541 $\pm$ 0.075} & {87.764 $\pm$ 0.089} & {83.238 $\pm$ 0.101}\\
\emph{Hand-engineered edges}
 & {93.100 $\pm$ 0.069} & {92.007 $\pm$ 0.073} & {88.388 $\pm$ 0.087} & {84.060 $\pm$ 0.099}\\
\emph{NRI head @ start}
 & {93.180 $\pm$ 0.068} & {92.304 $\pm$ 0.072} & {88.731 $\pm$ 0.086} & {84.869 $\pm$ 0.097}\\
\emph{NRI head @ middle}
 & {93.013 $\pm$ 0.069} & {92.176 $\pm$ 0.073} & {88.619 $\pm$ 0.086} & {84.011 $\pm$ 0.099}\\
\emph{Random walk @ start}
 & {93.227 $\pm$ 0.068} & {92.282 $\pm$ 0.072} & {88.726 $\pm$ 0.086} & {84.590 $\pm$ 0.098}\\
\emph{RL ablation @ middle}
 & {92.553 $\pm$ 0.071} & {91.606 $\pm$ 0.075} & {87.730 $\pm$ 0.089} & {84.195 $\pm$ 0.099}\\
\emph{GFSA layer (ours) @ start}
 & \textbf{93.820 $\pm$ 0.065} & \textbf{92.834 $\pm$ 0.070} & {89.577 $\pm$ 0.083} & {85.843 $\pm$ 0.094}\\
\emph{GFSA layer (ours) @ middle}
 & \textbf{94.058 $\pm$ 0.064} & \textbf{93.083 $\pm$ 0.069} & \textbf{89.932 $\pm$ 0.081} & \textbf{86.593 $\pm$ 0.092}\\

\midrule
\textbf{GREAT}\\
\emph{Base AST graph only}
 & {92.030 $\pm$ 0.073} & {91.156 $\pm$ 0.077} & {87.179 $\pm$ 0.091} & {82.964 $\pm$ 0.102}\\
\emph{Base AST graph, +2 layers}
 & {92.585 $\pm$ 0.071} & {91.477 $\pm$ 0.076} & {87.698 $\pm$ 0.089} & {83.093 $\pm$ 0.101}\\
\emph{Hand-engineered edges}
 & {92.174 $\pm$ 0.073} & {91.287 $\pm$ 0.076} & {87.168 $\pm$ 0.091} & {82.715 $\pm$ 0.102}\\
\emph{NRI head @ start}
 & {92.446 $\pm$ 0.072} & {91.520 $\pm$ 0.075} & {87.628 $\pm$ 0.089} & {83.781 $\pm$ 0.100}\\
\emph{NRI head @ middle}
 & {92.213 $\pm$ 0.073} & {91.166 $\pm$ 0.077} & {87.258 $\pm$ 0.090} & {82.303 $\pm$ 0.103}\\
\emph{Random walk @ start}
 & {92.849 $\pm$ 0.070} & {91.949 $\pm$ 0.074} & {88.272 $\pm$ 0.087} & {84.694 $\pm$ 0.097}\\
\emph{RL ablation @ middle}
 & {92.383 $\pm$ 0.072} & {91.295 $\pm$ 0.076} & {87.486 $\pm$ 0.090} & {83.927 $\pm$ 0.099}\\
\emph{GFSA layer (ours) @ start}
 & \textbf{93.466 $\pm$ 0.067} & \textbf{92.279 $\pm$ 0.072} & \textbf{88.845 $\pm$ 0.085} & {84.779 $\pm$ 0.097}\\
\emph{GFSA layer (ours) @ middle}
 & \textbf{93.266 $\pm$ 0.068} & \textbf{92.394 $\pm$ 0.072} & \textbf{88.863 $\pm$ 0.085} & \textbf{85.186 $\pm$ 0.096}\\

\midrule
\textbf{GGNN}\\
\emph{Base AST graph only}
 & {89.243 $\pm$ 0.084} & {87.703 $\pm$ 0.089} & {81.633 $\pm$ 0.105} & {75.785 $\pm$ 0.116}\\
\emph{Base AST graph, +2 layers}
 & \textbf{90.633 $\pm$ 0.079} & {88.948 $\pm$ 0.085} & {83.969 $\pm$ 0.099} & {79.972 $\pm$ 0.108}\\
\emph{Hand-engineered edges}
 & \textbf{90.681 $\pm$ 0.079} & {88.770 $\pm$ 0.085} & {83.524 $\pm$ 0.100} & {79.365 $\pm$ 0.110}\\
\emph{NRI head @ start}
 & {89.915 $\pm$ 0.082} & {88.151 $\pm$ 0.087} & {82.731 $\pm$ 0.102} & {77.460 $\pm$ 0.113}\\
\emph{NRI head @ middle}
 & {90.352 $\pm$ 0.080} & \textbf{89.613 $\pm$ 0.083} & \textbf{84.443 $\pm$ 0.098} & \textbf{80.535 $\pm$ 0.107}\\
\emph{Random walk @ start}
 & {89.729 $\pm$ 0.082} & {88.611 $\pm$ 0.086} & {82.956 $\pm$ 0.102} & {77.688 $\pm$ 0.113}\\
\emph{RL ablation @ middle}
 & {90.560 $\pm$ 0.079} & {89.269 $\pm$ 0.084} & \textbf{84.301 $\pm$ 0.098} & {79.480 $\pm$ 0.109}\\
\emph{GFSA layer (ours) @ start}
 & \textbf{90.939 $\pm$ 0.078} & {88.960 $\pm$ 0.085} & {83.909 $\pm$ 0.099} & {79.942 $\pm$ 0.108}\\
\emph{GFSA layer (ours) @ middle}
 & {90.217 $\pm$ 0.080} & {88.886 $\pm$ 0.085} & {83.633 $\pm$ 0.100} & {78.463 $\pm$ 0.111}\\
 
\bottomrule
\end{tabular}

\end{table}

\end{document}